\theoremstyle{definition}
\newtheorem{definition}{Definition}[section]
\newtheorem{lemma}{Lemma}[section]
\newtheorem{proposition}{Proposition}[section]
\journal{arxiv.org}
\begin{document}

\begin{frontmatter}



\title{A Novel Machine Learning Approach to Data Inconsistency with respect to a Fuzzy Relation}


\author{ Marko Palangeti\' c$^{a}$, Chris Cornelis$^a$,  Salvatore Greco$^{b,c}$, Roman S\l{}owi\' nski$^{d,e}$\\}

\address{$^a$Department of Applied Mathematics, Computer Science and Statistics, \\ Ghent University, Ghent, Belgium, \{marko.palangetic, chris.cornelis\}@ugent.be  \\
$^b$Department of Economics and Business, University of Catania, Catania, Italy,\\  salgreco@unict.it\\
$^c$Portsmouth Business School, Centre of Operations Research and Logistics (CORL), \\
University of Portsmouth, Portsmouth, United Kingdom \\
$^d$Institute of Computing Science,  Pozna\'n University of Technology, Pozna\'n, Poland,\\ roman.slowinski@cs.put.poznan.pl \\
$^e$Systems Research Institute, Polish Academy of Sciences, Warsaw, Poland \\}

\begin{abstract}
Inconsistency in prediction problems occurs when instances that relate in a certain way on condition attributes, do not follow the same relation on the decision attribute. For example, in ordinal classification with monotonicity constraints, it occurs when an instance dominating another instance on condition attributes has been assigned to a worse decision class. 
It typically appears as a result of perturbation in data caused by incomplete knowledge (missing attributes) or by random effects that occur during data generation (instability in the assessment of decision attribute values). Inconsistencies with respect to a crisp preorder relation (expressing either dominance or indiscernibility between instances) can be handled using symbolic approaches like rough set theory and by using statistical/machine learning approaches that involve optimization methods. Fuzzy rough sets can also be seen as a symbolic approach to inconsistency handling with respect to a fuzzy relation. In this article, we introduce a new machine learning method for inconsistency handling with respect to a fuzzy preorder relation. The novel approach is motivated by the existing machine learning approach used for crisp relations. We provide statistical foundations for it and develop optimization procedures that can be used to eliminate inconsistencies. The article also proves important properties and contains didactic examples of those procedures.

\end{abstract}

\begin{keyword}
inconsistencies in data \sep fuzzy logic \sep machine learning \sep rough sets


\end{keyword}

\end{frontmatter}


\section{Introduction}
Ordinal classification (also called ordinal regression) problems constitute a very important part of machine learning and statistical analysis \cite{gutierrez2015ordinal}. In ordinal classification, the goal is to predict for a certain instance $u$ from set $U$, one of $K$ different ordinal class labels $y \in \{1, \dots, K \}$. Usually, $u$ is characterised by its values for a given set of condition attributes, while $y$ is called a decision attribute. Ordinal classification problems exploit the existing ordering on the decision attribute. 
In some cases, an ordering also exists on the condition attributes. One way to incorporate that knowledge is through so-called monotonicity constraints. For a given preorder (dominance) relation on the set of instances $U$ based on the condition attributes, the monotonicity constraints can be formulated as follows: if instance $u_1$ dominates $u_2$ w.r.t. the given dominance relation on the condition attributes, then $u_1$ should be assigned to the same or to a better decision class than $u_2$. In such a case, we say that $u_1$ is consistent with $u_2$. Obviously, consistency is a symmetric relation and instances that are incomparable w.r.t.\ condition attributes are consistent by default.

Ordinal classification problems that include monotonicity constraints are called monotone classification problems. They arise in many areas, such as medical diagnosis \cite{chandrasekaran2005isotonic}, survey data \cite{cao2004instance}, bankruptcy risk estimation \cite{greco1998new}, house pricing \cite{potharst2002classification} and others. A comprehensive survey of monotone classification methods is given in \cite{cano2019monotonic}.

In practice, not all pairs of instances satisfy the monotonicity constraints (are not consistent) due to some imperfectness of ordinal classification data, like missing attributes or instability of the assessment of decision attribute values at the stage of data generation. Hence, some initial preprocessing of data is often performed in order to enforce the monotonicity constraints. One of the main methods to handle inconsistencies in data is the rough set approach \cite{pawlak1982rough}. For a given decision class in a classification problem, the approach outputs lower and upper approximations of that class. The lower approximation contains instances from the decision class that are consistent with all other instances, while the upper approximation contains instances that relate with instances from the decision class. The original rough set approach handles inconsistencies w.r.t.\ an equivalence (indiscernibility) relation and not w.r.t.\ a dominance relation (as is defined in the monotonicity constaints). To make it applicable for monotone classification problems, Greco et al.\ \cite{greco2001rough} extended the original rough set theory with their Dominance-based Rough Set Approach (DRSA) where the new approach replaces the indiscernibility relation by a dominance relation. After the introduction of DRSA, the original rough set theory is usually referred to as Indiscernibility-based Rough Set Approach (IRSA). Recently, the two were integrated into the Preorder-based Rough Set Approach (PRSA) \cite{palangetic2021granular}.


A more comprehensive analysis of monotone classification from the statistical learning point of view was given by Kotłowski and Słowiński \cite{kotlowski2008statistical}. They provided statistical foundations of the monotonicity constraints and developed a machine learning method to incorporate them into data analysis. This method removes inconsistencies in data (``monotonizes" them) in result of an optimization procedure that minimizes the cost of label changes in the decision attribute. It produces a new set of labels called a \textit{monotone approximation}. This approach also generalizes standard rough sets and provides another probabilistic view of them. The approach found its application in the same areas as DRSA \cite{kotlowski2008stochastic}, as well as in the development of rule induction and ensemble rules methods \cite{dembczynski2008ensemble}. In the remainder of the paper, we refer to this method as \textit{KS approach}.


On the other hand, fuzzy logic and fuzzy set theory \cite{zadeh1965} study the gradual truth of logical statements, and are used extensively in modeling imprecise and vague information. One of the ways to utilize fuzzy logic in data analysis is through fuzzy relations that are able to model relationships between numerical vectors. Namely, the usual crisp relations may distinguish only between two extreme cases: either instances relate or do not. Fuzzy relations, on the other hand, can express a degree in which two instances relate on a scale between 0 and 1. This is suitable to model similarity between numerical vectors or other structures (graphs, strings, DNA chains ...). The  integration of fuzzy logic and IRSA was initially proposed by Dubois and Prade \cite{dubois1990rough}, allowing to approximate fuzzy sets using a fuzzy indiscernibility relation. A similar extension of DRSA to fuzzy set theory was proposed by Greco et al. \cite{greco2000fuzzy}.


This article is motivated by the KS approach introduced above in the sense that we generalize the monotonicity constraints using fuzzy relations while the ordinal classes are replaced with fuzzy membership degrees. Instead of a crisp preorder relation (or an equivalence relation if it is symmetric), we will now consider a fuzzy $T$-preorder relation to model the relationship between different instances on the condition attributes, where $T$ refers to a given $t$-norm that models conjunction in fuzzy logic. The $T$-preorder relations also include $T$-equivalence relations that can measure (symmetric) similarity between numerical vectors. Moreover, the new approach requires that the decision attribute is a fuzzy set, i.e., it has to take values from interval $[0,1]$. Hence, it is appropriate for problems where the decision attribute can be modeled using values from this interval; concretely, for binary classification and regression problems.


Just like the KS approach \cite{kotlowski2008statistical}, our proposal is also interesting from the granular computing point of view. Granular computing is a paradigm which involves a partition of information into meaningful groups, classes or clusters called granules \cite{zadeh1979fuzzy, zadeh1997toward, bargiela2006roots}, and which has been applied to diverse models in data analysis. For example, in \cite{lin1998granular} and \cite{yao1999granular}, granular computing using neighborhood systems for the interpretation of granules was studied, while in \cite{pawlak1998granularity, pawlak2002granularity, polkowski1999towards, skowron2005approximation}, granular aspects of rough set theory were examined. 

In particular, the sets obtained with the Kotłowski-Słowiński approach, as well as with the novel approach, possess the property of granular representation: they can be represented as unions of meaningful granules \cite{palangetic2021granular,yao1999rough,greco2010dominance}. Such sets are called granularly representable sets \cite{palangetic2021granular}. Due to the granular properties of our new approach, we call 
its result a \textit{granular approximation}.

Granularity of fuzzy rough sets has been already applied in rule induction \cite{zhao2009building}, and we expect that the granular approximations we propose can serve the same purpose. Since granular approximations are natural generalizations of fuzzy rough sets, they can be applied in methods like Fuzzy-Rough Nearest Neighbours (FRNN) for classification \cite{jensen2011fuzzy, sarkar2007fuzzy}, Fuzzy Rough Prototype Selection (FRPS) \cite{verbiest2013frps}, Fuzzy Rough Feature Selection (FRFS) \cite{cornelis2010attribute,qian2015fuzzy}, and so on.

The article is also an example of a successful integration of ideas and contributions of rough sets, fuzzy sets and machine learning. Handling inconsistency and granulation are main contributions of rough sets. The theory of fuzzy sets allows us to use fuzzy relations to model a non-binary interaction among instances. For example, fuzzy relations can be used in modeling similarities between instances represented by numerical vectors. At the end, including statistical/machine learning allows us to make data consistent, incurring the least possible cost (w.r.t. some loss function) using optimization methods.


The remainder of the paper is organized as follows. In Section \ref{sec:preliminaries}, we recall the required preliminaries about statistical learning theory, monotone approximations, fuzzy logic, and fuzzy rough sets. In Section \ref{sec:statistical_granular}, we develop the statistical foundations of granular approximations. Section \ref{sec:calculation} deals with optimization problems that output granular approximations, while their important properties with proofs are given in Section \ref{sec:properties}. Section \ref{sec:toy_examples} provides didactic examples for the approaches from Section \ref{sec:calculation}. Section \ref{sec:conclusion} contains our conclusion and outlines future work.

In Appendix A--D, we deal with the dual formulations of the optimization problems introduced in Section \ref{sec:calculation}. 
Using the duality theory, we obtain greedy algorithms for the optimization problems from Section \ref{sec:calculation} that allow us to prove Proposition \ref{prop:parameter_monotonicity}.

\section{Preliminaries}

\label{sec:preliminaries}

\subsection{Statistical learning for monotone classification}
\label{subsec:stat_learning}
A random variable $\mathcal{X}$ is a mapping from a probability space to a certain codomain $X$. If the codomain is a subset of the real numbers, $\mathcal{X}$ is usually characterized with a cumulative distribution function (CDF) defined as $F_{\mathcal{X}} = P(\mathcal{X} \leq x)$ for $x \in X$. A CDF is a non-decreasing and right-continuous function with codomain $[0,1]$. If the CDF is continuous then we say that $\mathcal{X}$ is continuous, while if the image of the CDF is a finite set, we say that $\mathcal{X}$ is discrete. Based on the CDF, a quantile function may be defined as follows: $Q_{\mathcal{X}}(p) = \inf\{y; F_{\mathcal{X}}(y) \geq p \}$ for $0 < p < 1$. In other words, if $p$ is in the image of $F_{\mathcal{X}}$, then $Q_{\mathcal{X}}(p)$ is the smallest value for which $P(\mathcal{X} \leq Q_{\mathcal{X}}(p)) = p$. The value $Q_{\mathcal{X}}(\frac{1}{2})$ is called the median of $\mathcal{X}$. The expected value of $\mathcal{X}$ can be expressed using the quantile function \cite{gilchrist2000statistical}:  
\begin{equation}
\label{eq:expectation_quantile}
E(\mathcal{X}) = \int_{0}^1 Q_{\mathcal{X}}(p) dp.
\end{equation}

We say that $\mathcal{X}_1$ \textit{stochastically dominates} $\mathcal{X}_2$ if $F_{\mathcal{X}_1}(x) \geq F_{\mathcal{X}_2}(x)$ for all $x \in X$.

\begin{proposition} \cite{shaked2007stochastic} For two random variables $\mathcal{X}_1$ and $\mathcal{X}_2$, it holds that 
$$
\forall x \in X, F_{\mathcal{X}_1}(x) \leq F_{\mathcal{X}_2}(x) \Leftrightarrow \forall p \in (0,1), Q_{\mathcal{X}_1}(p) \geq Q_{\mathcal{X}_2}(p).
$$
\end{proposition}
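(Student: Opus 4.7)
The plan is to prove the equivalence by first establishing a Galois-type duality between the CDF and the quantile function, namely
\[
F_{\mathcal{X}}(x) \geq p \iff Q_{\mathcal{X}}(p) \leq x,
\]
and then applying it in both directions. The forward implication of this duality is immediate from the definition of $Q_{\mathcal{X}}$ as an infimum: if $F_{\mathcal{X}}(x) \geq p$, then $x$ belongs to $\{y : F_{\mathcal{X}}(y) \geq p\}$, so $Q_{\mathcal{X}}(p) \leq x$. For the converse, I would use the right-continuity of $F_{\mathcal{X}}$ stated in the preliminaries to argue that the infimum defining $Q_{\mathcal{X}}(p)$ is attained, i.e.\ $F_{\mathcal{X}}(Q_{\mathcal{X}}(p)) \geq p$; then by monotonicity of $F_{\mathcal{X}}$, $Q_{\mathcal{X}}(p) \leq x$ yields $F_{\mathcal{X}}(x) \geq F_{\mathcal{X}}(Q_{\mathcal{X}}(p)) \geq p$. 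This lemma is the workhorse of the whole argument.

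For the forward direction $(\Rightarrow)$ of the proposition, assume $F_{\mathcal{X}_1}(x) \leq F_{\mathcal{X}_2}(x)$ for all $x$ and fix $p \in (0,1)$. Any $y$ satisfying $F_{\mathcal{X}_1}(y) \geq p$ automatically satisfies $F_{\mathcal{X}_2}(y) \geq p$ by the hypothesis, so
\[
\{y : F_{\mathcal{X}_1}(y) \geq p\} \subseteq \{y : F_{\mathcal{X}_2}(y) \geq p\}.
\]
Taking infima reverses the inclusion, giving $Q_{\mathcal{X}_2}(p) \leq Q_{\mathcal{X}_1}(p)$, as required.

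For the backward direction $(\Leftarrow)$, I would argue by contradiction. Suppose there exists $x$ with $F_{\mathcal{X}_1}(x) > F_{\mathcal{X}_2}(x)$. Using right-continuity of $F_{\mathcal{X}_2}$, choose $x' > x$ close enough to $x$ so that $F_{\mathcal{X}_2}(x')$ is still strictly less than $F_{\mathcal{X}_1}(x)$, and pick $p$ with $F_{\mathcal{X}_2}(x') < p < F_{\mathcal{X}_1}(x)$. The duality then gives $Q_{\mathcal{X}_1}(p) \leq x$, while $F_{\mathcal{X}_2}(x') < p$ together with monotonicity of $F_{\mathcal{X}_2}$ forces every $y$ in $\{y : F_{\mathcal{X}_2}(y) \geq p\}$ to satisfy $y > x'$, whence $Q_{\mathcal{X}_2}(p) \geq x' > x \geq Q_{\mathcal{X}_1}(p)$, contradicting the hypothesis.

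The routine parts are the set-inclusion argument and bookkeeping with infima; the only subtle step, and the one I would be most careful about, is the backward direction, where a naive choice $p = F_{\mathcal{X}_1}(x)$ or $p = F_{\mathcal{X}_2}(x)$ is not enough to produce a strict gap between $Q_{\mathcal{X}_2}(p)$ and $Q_{\mathcal{X}_1}(p)$. The right-continuity of $F_{\mathcal{X}_2}$ is what lets me shift slightly to $x' > x$ and thereby convert the pointwise inequality $F_{\mathcal{X}_1}(x) > F_{\mathcal{X}_2}(x)$ into the strict quantile inequality $Q_{\mathcal{X}_2}(p) > Q_{\mathcal{X}_1}(p)$ needed for the contradiction.
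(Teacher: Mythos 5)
The paper does not actually prove this proposition; it is quoted without proof from the cited reference \cite{shaked2007stochastic}, so there is no in-paper argument to compare against. Your proof is a correct, self-contained derivation of the standard fact. The Galois-type duality $F_{\mathcal{X}}(x) \geq p \Leftrightarrow Q_{\mathcal{X}}(p) \leq x$ is exactly the right workhorse: its forward half is definitional, and its converse correctly uses right-continuity to get $F_{\mathcal{X}}(Q_{\mathcal{X}}(p)) \geq p$ (with the restriction $p \in (0,1)$ quietly guaranteeing that the set $\{y : F_{\mathcal{X}}(y) \geq p\}$ is nonempty and bounded below, so the infimum is a genuine real number). The forward direction via set inclusion is clean, and you correctly diagnosed the one subtle point in the backward direction: choosing $p$ between $F_{\mathcal{X}_2}(x)$ and $F_{\mathcal{X}_1}(x)$ directly only yields $Q_{\mathcal{X}_2}(p) \geq x \geq Q_{\mathcal{X}_1}(p)$, which is not yet a contradiction, whereas shifting to $x' > x$ by right-continuity of $F_{\mathcal{X}_2}$ produces the strict gap $Q_{\mathcal{X}_2}(p) \geq x' > x \geq Q_{\mathcal{X}_1}(p)$. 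No gaps; this is essentially the textbook proof.
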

The above proposition states that the stochastic dominance can be characterized using quantile functions instead of CDFs.

We now examine the \textit{prediction problem}. Let $\mathcal{X}$ and $\mathcal{Y}$ be two random variables with codomains $X$ and $Y$ respectively. When making predictions, we examine a causal relationship between $\mathcal{X}$ and $\mathcal{Y}$, i.e., how does $\mathcal{X}$ influence $\mathcal{Y}$. Concretely, we are interested to find a function $h$ such that $h(\mathcal{X})$ is close to $\mathcal{Y}$, i.e., it predicts values of $\mathcal{Y}$ for given values of $\mathcal{X}$. Formally, let $L:Y \times Y \rightarrow \mathbb{R}^{+}$ be a loss function. A prediction problem consists in finding a function $h: X \rightarrow Y$ such that the risk 
$$
R(h) = E(L(\mathcal{Y}, h(\mathcal{X})))
$$ is minimized. The optimal $h$, denoted as $h^*$, is called the Bayes predictor. In practice, the random variables $\mathcal{X}$ and $\mathcal{Y}$ are unknown and we only have their realizations $x_1, \dots, x_n$ and $y_1, \dots, y_n$. Our goal is then to minimize the empirical risk: 
$$
\hat{R}(f) = \frac{1}{n}\sum_{i=1}^n L(y_i, h(x_i)).
$$
Minimization of the empirical risk is called \textit{learning} and it basically amounts to an estimation of the Bayes predictor. The causal relationship between $\mathcal{X}$ and $\mathcal{Y}$ may be represented by a family of random variables $\mathcal{Y}_{\mathcal{X} = x}$, which stands for variable $\mathcal{Y}$ conditioned on $\mathcal{X}=x$. Such a random variable, for a fixed $x$, may be described by its CDF:
$$
F_{\mathcal{Y}|\mathcal{X} = x}(y) = P(\mathcal{Y} \leq y| \mathcal{X} = x).
$$
Searching for an optimal prediction function $h$ in the learning process may be seen as an estimation of certain characteristics of the family of random variables $\mathcal{Y}_{\mathcal{X} = x}$. For example, when the loss function is mean squared error
\begin{equation}
L(y, \hat{y}) = (y - \hat{y})^2 \label{eq:mse},
\end{equation}
for $y, \hat{y} \in Y$ and $Y = \mathbb{R}$, then the Bayes predictor is $h^*(x) = E(\mathcal{Y}|\mathcal{X} = x)$, i.e., the conditional mean, while if the loss function is mean absolute error 
\begin{equation*}
L(y, \hat{y}) = |y - \hat{y}|,
\end{equation*}
then the Bayes predictor is $h^*(x) = Q_{\mathcal{Y}|\mathcal{X}=x}(\frac{1}{2})$, i.e., the conditional median \cite{berger2013statistical}. The previous examples show that a Bayes predictor is a characteristic of family $\mathcal{Y}_{\mathcal{X} = x}$ (conditional mean and median in the examples), which means that the learning process leads to an estimation of those characteristics.


Kotłowski and Słowiński \cite{kotlowski2008statistical} introduced a statistical framework for monotone classification. In this case, it is assumed that there is a preorder relation $\succeq_X$ on codomain $X$ of $\mathcal{X}$ while $Y$ consists of a finite number of totally ordered values that distinguish different ordinal classes. Denote these classes by $1, \dots, K$. The monotonicity constraint states that if $x \succeq x'$ then $x$ has to belong to at least the same class as $x'$. This is also called the Pareto principle in decision theory. Let $K_{-1} = \{ 1, \dots, K-1 \}$. In probabilistic terms, the monotonicity constraint says that $x \succeq x'$ implies
\begin{equation}
\label{eq:monotonically_constrained}
\begin{array}{ll}
&\forall k \in K_{-1}, \, P(\mathcal{Y} \leq k| \mathcal{X} = x) \leq P(\mathcal{Y} \leq k| \mathcal{X} = x') \\
\Leftrightarrow &  \forall k \in K_{-1}, \,  F_{\mathcal{Y}|\mathcal{X}=x}(k) \leq F_{\mathcal{Y}|\mathcal{X}=x'}(k) \\
\Leftrightarrow &  \forall p \in (0,1), \,   Q_{\mathcal{Y}|\mathcal{X}=x}(p) \geq Q_{\mathcal{Y}|\mathcal{X}=x'}(p).
\end{array}
\end{equation}
The previous expression means that the probability that $x$ will be assigned to a class at most $k$ is smaller or equal than that $x'$ will be assigned to the same class. A family $\mathcal{Y}_{\mathcal{X} = x}$ is \textit{monotonically constrained} if  (\ref{eq:monotonically_constrained}) is satisfied. A prediction function $h$ is called monotone if $x \succeq x' \implies h(x) \geq h(x')$. The goal of monotone classification is to find a proper monotone $h$ under the assumption that the family $\mathcal{Y}_{\mathcal{X} = x}$ is monotonically constrained. Since $h$, as the output of the learning process, should be as close as possible to the Bayes predictor $h^*$, we require that $h^*$ is also monotone. Given that the form of $h^*$ depends on the loss function, choosing a proper loss function is crucial for the learning process. A loss function for which the Bayes predictor is monotone is called a monotone loss function. Kotłowski and Słowiński \cite{kotlowski2008statistical} showed that both mean squared error and mean absolute error are monotone loss functions. They also examined a family of monotone loss functions called $p$-quantile loss defined as:
\begin{equation}
    L_p(y, \hat{y}) = (y - \hat{y}) (p - \mathbf{1}_{y-\hat{y} < 0}) = \begin{cases}
        p|y - \hat{y}| & \text{if   }  y-\hat{y} > 0, \\
        (1-p)|y - \hat{y}| & \text{otherwise}.
        \end{cases} \label{eq:quantile_loss}
\end{equation}
for $p \in [0,1]$, where $\mathbf{1}$ stands for the indicator function. The name $p$-quantile loss is used since the Bayes predictor for such loss function is the conditional $p$-quantile $h_{p}^*(x) = Q_{\mathcal{Y}|\mathcal{X} = x}(p)$. For $p=\frac{1}{2}$ we have that $L_{1/2}$ is equivalent to the mean absolute error. For the $p$-quantile loss, we have the following important result proved in \cite{kotlowski2008statistical}.
\begin{proposition}
\label{prop:quantile_scaling}
Let $s: \mathcal{Y} \rightarrow \mathbb{R}$ be an increasing function. Then the loss functions $L_p(y, \hat{y})$ and $L_p(s(y), s(\hat{y}))$ have the same Bayes predictor. 
\end{proposition}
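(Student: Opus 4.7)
The key observation is that the conditional $p$-quantile is equivariant under strictly increasing transformations, so the minimizer of the transformed quantile loss coincides with the minimizer of the original one. The plan is to exploit this in three steps: a pointwise reduction of the risk, a change of variables in the random variable, and an appeal to quantile equivariance.

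First, I would reduce the risk minimization to a pointwise problem via the tower property of conditional expectation:
\begin{equation*}
E(L_p(s(\mathcal{Y}), s(h(\mathcal{X})))) = E_{\mathcal{X}}\bigl[ E(L_p(s(\mathcal{Y}), s(h(\mathcal{X}))) \mid \mathcal{X}) \bigr].
\end{equation*}
The outer minimization decouples over $\mathcal{X}$, so for each fixed $x$ the Bayes predictor picks $\hat{y}^{*}(x) \in Y$ minimizing $\phi_x(\hat{y}) := E(L_p(s(\mathcal{Y}), s(\hat{y})) \mid \mathcal{X} = x)$. I would then perform the change of variables $\mathcal{Z} := s(\mathcal{Y})$ and $\hat{z} := s(\hat{y})$, turning $\phi_x(\hat{y})$ into $E(L_p(\mathcal{Z}, \hat{z}) \mid \mathcal{X} = x)$, which is precisely the standard $L_p$-risk for $\mathcal{Z}$ conditional on $\mathcal{X}=x$. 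Its unconstrained minimizer over $\hat{z} \in \mathbb{R}$ is therefore $Q_{\mathcal{Z} \mid \mathcal{X}=x}(p) = Q_{s(\mathcal{Y}) \mid \mathcal{X}=x}(p)$ by the Bayes predictor formula recalled just before (\ref{eq:quantile_loss}).

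Second, I would invoke the quantile equivariance identity
\begin{equation*}
Q_{s(\mathcal{Y}) \mid \mathcal{X}=x}(p) = s\bigl(Q_{\mathcal{Y} \mid \mathcal{X}=x}(p)\bigr).
\end{equation*}
For strictly increasing $s$ this follows from the event equality $\{s(\mathcal{Y}) \leq s(y)\} = \{\mathcal{Y} \leq y\}$, which yields $F_{s(\mathcal{Y}) \mid \mathcal{X}=x}(s(y)) = F_{\mathcal{Y} \mid \mathcal{X}=x}(y)$, and then by taking the infimum in the quantile definition from Section~\ref{subsec:stat_learning}. Combining with the previous step, the optimum $\hat{z}^{*}$ equals $s\bigl(Q_{\mathcal{Y} \mid \mathcal{X}=x}(p)\bigr)$. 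Since $\hat{z}^{*} = s(\hat{y}^{*}(x))$ and $s$ is injective, this forces $\hat{y}^{*}(x) = Q_{\mathcal{Y} \mid \mathcal{X}=x}(p) = h_p^{*}(x)$, which is exactly the Bayes predictor of the untransformed loss.

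I expect the main obstacle to be the quantile equivariance step, specifically the case in which $s$ is only non-decreasing or fails to be continuous, so that the unconstrained $p$-quantile of $s(\mathcal{Y})$ might land in a gap of the image $s(Y)$ and the pointwise minimization of $\phi_x$ would then be constrained to $s(Y)$ rather than the whole real line. Under the intended reading of \emph{increasing} as strictly increasing on $Y$, the event identity above combined with injectivity of $s$ close the argument cleanly; otherwise one would work with a generalized left-inverse of $s$ and lean on right-continuity of the CDF to control the boundary behaviour at the quantile.
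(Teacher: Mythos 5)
Your argument is correct. Note first that the paper itself offers no proof of this proposition: it is imported verbatim from Kot\l{}owski and S\l{}owi\'nski \cite{kotlowski2008statistical} with the remark that it is ``proved in'' that reference, so there is no internal proof to compare against. Your route --- decouple the risk pointwise over $\mathcal{X}=x$ via the tower property, recognize $E(L_p(s(\mathcal{Y}), s(\hat{y}))\mid\mathcal{X}=x)$ as the $p$-quantile risk of the transformed variable $\mathcal{Z}=s(\mathcal{Y})$, and then invoke equivariance of quantiles under increasing maps together with injectivity of $s$ to pull the minimizer back --- is the standard argument and is essentially the one given in the cited source. The caveat you raise about the unconstrained minimizer possibly landing in a gap of $s(Y)$ is worth flagging but is vacuous in both settings where the paper actually uses the proposition: in the ordinal-classification setting $Y$ is finite, so the $p$-quantile of $\mathcal{Z}$ is a support point and hence lies in $s(Y)$ automatically; and in the fuzzy setting $s=\varphi$ is an increasing bijection of $[0,1]$, which is necessarily continuous and strictly increasing, so $Q_{s(\mathcal{Y})\mid\mathcal{X}=x}(p)=s\bigl(Q_{\mathcal{Y}\mid\mathcal{X}=x}(p)\bigr)$ holds with no boundary issues. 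One minor point of rigor you could add: since the minimizer of the pinball risk may be a whole interval, the statement ``have the same Bayes predictor'' should be read as referring to the canonical choice $Q(p)=\inf\{y:F(y)\ge p\}$, which is exactly the object your equivariance identity tracks.
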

Proposition \ref{prop:quantile_scaling} states that a different scaling of ordinal classes does not change the Bayes predictor, only the order matters. 

\subsection{Monotone approximation}
In order to incorporate monotonicity constraints into the learning process, KS approach use an optimization procedure to ``monotonize" data by eliminating inconsistencies. Let $\Bar{y}_i, i = 1, \dots, n$, be the observed ordinal labels which do not necessarily satisfy monotonicity constraints due to possible inconsistency, and let $\hat{y}_i, i = 1, \dots, n$, be the values that we want to \textit{learn} and which satisfy the constraints. Then, for a given monotone loss function $L$, the optimization problem can be formulated as 

\begin{equation}
\label{eq:monotone_approximation}
\begin{aligned}
&\text{minimize}  && \displaystyle\sum_{ i = 1}^{n}  L(\Bar{y}_i, \hat{y}_i) &&\\
&\text{subject to}    && x_i \succeq x_j \implies \hat{y}_i \geq \hat{y}_j,    && i,j  = 1, \dots, n \\
  &              && \hat{y}_i \in \{0, \dots, K\}, && i = 1, \dots, n
\end{aligned}
\end{equation}
In other words, one wants to calculate new labels that are as close as possible to the original ones w.r.t. loss function $L$ and which satisfy the monotonicity constraints. The obtained labels are called a \textit{monotone approximation} of the original ones. The same authors showed that when $L$ is monotone, then problem (\ref{eq:monotone_approximation}) can be solved using linear programming. Moreover, the solutions of the linear optimization problem will always be integers due to the unimodularity of the constraint matrix \cite{papadimitriou1998combinatorial}. 

\subsection{Fuzzy logic connectives}
\label{subsec:fuzzy_logic} 
The definitions and terminology in this subsection are based on \cite{klement2013triangular}. Recall that a {\em $t$-norm} $T:[0,1]^2 \rightarrow [0,1]$ is a binary operator which is commutative, associative, non-decreasing in both arguments, and for which it holds that $ \forall x \in [0,1] ,\, T(x,1) = x$. Since a $t$-norm is associative, we may extend it unambiguously to a $[0,1]^n \rightarrow [0,1]$ mapping for any $n > 2$. Some commonly used $t$-norms are listed in Table \ref{table:tnorms}.

We say that $x \in [0,1]$ is a nilpotent element of a $t$-norm $T$ if there exists a natural number $n$ such that
$$
T(\underbrace{x, \ldots, x}_{\mbox{$n$ times}}) = 0.
$$ A $t$-norm is called nilpotent if it is continuous and every $x \in (0,1)$ is a nilpotent element. For example, $T_L$ from Table \ref{table:tnorms} is nilpotent while the others are not.
A $t$-norm is strict if it is continuous and strictly increasing in both arguments. $T_P$ from Table \ref{table:tnorms} is strict while the others are not.

We call a $t$-norm Archimedean if 
$$
(\forall (x,y) \in (0, 1)^2)(\exists n \ge 2)(T(\underbrace{x, \ldots, x}_{\mbox{$n$ times}}) < y).
$$
$T_P$, $T_L$ and $T_D$ from Table \ref{table:tnorms} are Archimedean, while $T_M$ and $T_{nM}$ are not.

\begin{table}[H]
\begin{adjustbox}{width=\columnwidth,center}
    \begin{tabular}{c|rcl|rcl}
    Name & Definition & & & R-implicator\\
    \hline
Minimum &         $T_M(x,y)$ &=& $\min(x,y)$  & $I_{T_M}(x,y)$ &=& $\left\{\begin{array}{cc}
       1 & \mbox{if $x \le y$}  \\
            y & \mbox{otherwise} 
        \end{array}
        \right.$    \\
Product        & $T_P(x,y)$ &=& $xy$ & $I_{T_P}(x,y)$ &=& $\left\{\begin{array}{cc}
       1 & \mbox{if $x \le y$}  \\
            \frac{y}{x} & \mbox{otherwise} 
        \end{array}
        \right.$ \\
{\L}ukasiewicz    &    $T_L(x,y)$ &=& $\max(0,x+y-1)$ &    $I_{T_L}(x,y)$ &=& $\min(1,1-x+y)$ \\
Drastic     &   $T_D(x,y)$ &=& $\left\{\begin{array}{cc}
               \min(x,y) & \mbox{if $\max(x,y) = 1$}  \\
            0 & \mbox{otherwise} 
        \end{array}
        \right.$  &   $I_{T_D}(x,y)$ &=& $\left\{\begin{array}{cc}
           y & \mbox{if $x = 1$}  \\
            1 & \mbox{otherwise} 
        \end{array}
        \right.$\\
\makecell{Nilpotent \\ minimum} & $T_{nM}(x,y)$ &=& $\left\{\begin{array}{cc}
            \min(x,y) & \mbox{if $x + y > 1$}  \\
            0 & \mbox{otherwise} 
        \end{array}
        \right.$ & $I_{T_{nM}}(x,y)$ &=& $\left\{\begin{array}{cc}
            1 & \mbox{if $x \le y$}  \\
            \max(1-x,y) & \mbox{otherwise} 
        \end{array}
        \right.$\\
        \hline
    \end{tabular}
    \end{adjustbox}
    \vspace{8pt}
    \caption{Some common $t$-norms and their R-implicators}
    \label{table:tnorms}
\end{table}

A $t$-norm is a continuous Archimedean $t$-norm if and only if it is either strict or nilpotent.

We say that two $t$-norms $T_1$ and $T_2$ are isomorphic if there exists a bijection $\varphi:[0,1] \rightarrow [0,1]$ such that $T_1 = \varphi^{-1}(T_2(\varphi(x), \varphi(y)))$.

\begin{proposition}
\label{prop:t_norm_ispomorphic}
A strict $t$-norm is isomorphic to $T_P$ while a nilpotent $t$-norm is isomorphic to $T_L$.
\end{proposition}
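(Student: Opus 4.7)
The plan is to reduce both statements to the classical additive-generator representation of continuous Archimedean $t$-norms and then exhibit the isomorphism explicitly. Recall that a continuous Archimedean $t$-norm $T$ admits an additive generator, i.e., a continuous strictly decreasing function $t\colon[0,1]\to[0,\infty]$ with $t(1)=0$ such that
\[
T(x,y) \;=\; t^{(-1)}\!\bigl(\min(t(x)+t(y),\, t(0))\bigr),
\]
where $t^{(-1)}$ denotes the pseudo-inverse. Moreover, $T$ is strict iff $t(0)=\infty$ and nilpotent iff $t(0)<\infty$. I would invoke this representation theorem as a black box (it is the main result of Chapter 5 of the Klement--Mesiar--Pap reference already cited in the paper); this is also the step I expect to be the genuine heavy lifting, but it is not something we need to reprove here.

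For the strict case, I would observe that $T_P$ itself has additive generator $t_P(x)=-\ln x$, with $t_P(0)=\infty$ and $t_P^{-1}(u)=e^{-u}$. Given an arbitrary strict $t$-norm $T$ with generator $t$ (so $t(0)=\infty$), I would define
\[
\varphi(x) \;=\; \exp\bigl(-t(x)\bigr).
\]
Since $t$ is continuous, strictly decreasing, and runs from $t(1)=0$ to $t(0)=\infty$, the map $\varphi$ is a continuous strictly increasing bijection $[0,1]\to[0,1]$ with $\varphi(0)=0$ and $\varphi(1)=1$. A direct computation then gives
\[
\varphi^{-1}\!\bigl(T_P(\varphi(x),\varphi(y))\bigr)
\;=\; t^{-1}\!\bigl(-\ln(e^{-t(x)}e^{-t(y)})\bigr)
\;=\; t^{-1}(t(x)+t(y))
\;=\; T(x,y),
\]
where no truncation is needed since $t(0)=\infty$. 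Hence $T$ is isomorphic to $T_P$ via $\varphi$.

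For the nilpotent case, I would exploit the fact that $T_L$ has additive generator $t_L(x)=1-x$ with $t_L(0)=1$. Given a nilpotent $T$ with generator $t$ (so $0<t(0)<\infty$), I normalize by setting $t'(x)=t(x)/t(0)$; this is still a generator of $T$ and satisfies $t'(1)=0$, $t'(0)=1$. Define
\[
\varphi(x) \;=\; 1-t'(x),
\]
again a continuous strictly increasing bijection $[0,1]\to[0,1]$. Then
\[
\varphi^{-1}\!\bigl(T_L(\varphi(x),\varphi(y))\bigr)
\;=\; (t')^{-1}\!\bigl(1-\max(0,1-t'(x)-t'(y))\bigr)
\;=\; (t')^{-1}\!\bigl(\min(1,t'(x)+t'(y))\bigr)
\;=\; T(x,y),
\]
using that $(t')^{(-1)}$ coincides with $(t')^{-1}$ on $[0,1]$ and is $0$ above $1$. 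This yields the desired isomorphism with $T_L$. The only subtlety beyond the representation theorem is checking that $\varphi$ is a genuine bijection of $[0,1]$ in each case, which is immediate from the endpoint behavior of $t$ (respectively $t'$); everything else is a mechanical unwinding of definitions.
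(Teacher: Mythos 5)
Your proof is correct: the paper states this proposition without proof, citing it as a known result from the Klement--Mesiar--Pap reference, and your argument via additive generators (with $\varphi(x)=e^{-t(x)}$ in the strict case and $\varphi(x)=1-t(x)/t(0)$ in the nilpotent case) is exactly the standard derivation of that result, with the computations and the bijectivity checks carried out correctly.
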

We denote with
\begin{equation}
    T_{L, \varphi} = \varphi^{-1}(\max(\varphi(x) + \varphi(y) -1, 0)) \label{eq:nilpotent}
\end{equation}
 a family of nilpotent $t$-norms, i.e., $t$-norms that are isomorphic to $T_L$ with bijection $\varphi$ and we denote with
\begin{equation}
T_{P, \varphi} = \varphi^{-1}(\varphi(x) \varphi(y)) \label{eq:strict}
\end{equation}
 a family of strict $t$-norms, i.e., $t$-norms that are isomorphic to $T_L$ with bijection $\varphi$.

We say that a $t$-norm is $D$-convex if its partial mappings are convex functions. 
$T_P$ and $T_L$ from Table \ref{table:tnorms} are $D$-convex, while the others are not. 
More details on the characterization of $D$-convex $t$-norms can be found in \cite{palangetic2021granular}.

An {\em implicator}  (or {\em fuzzy implication}) $I : [0,1]^2 \rightarrow [0,1]$ is a binary operator which is non-increasing in the first component, non-decreasing in the second one and for which it holds that $I(1,0) = 0$ and $I(0,0) = I(0,1) = I(1,1) = 1$. 

The residuation property holds for a $t$-norm $T$ and an implicator $I$ if 
$$
T(x,y) \leq z \Leftrightarrow x \leq I(y,z).
$$ 
It is satisfied if and only if $T$ is left-continuous and $I$ is defined as the residual implicator (R-implicator) of $T$, that is
$$I_T(x,y) = \sup \{\lambda  \in [0,1]; T(x,\lambda) \leq y\}.$$

The very right column of Table \ref{table:tnorms} shows the residual implicators of the corresponding $t$-norms. Note that all of them, except $I_{T_D}$, satisfy the residuation property. 
Implicators that satisfy the residuation principle have the ordering property
\begin{equation}
\label{eq:ordering_property}
    x \leq y \Leftrightarrow I(x,y) = 1.
\end{equation}
Given a $[0,1] \rightarrow [0,1]$ bijection $\varphi$, the residual implicators of nilpotent and strict $t$-norms $T_{L, \varphi}$ and $T_{P, \varphi}$ will be denoted by $I_{L, \varphi}$ and $I_{P, \varphi}$.

A {\em negator}  (or { \em fuzzy negation}) $N : [0,1] \rightarrow [0,1]$ is a unary non-increasing operator for which it holds that $N(0) = 1$ and $N(1) = 0$. A negator is involutive if $N(N(x)) = x$ for all $x \in [0,1]$. The standard negator is defined as $N_s(x) = 1-x$.

For a left continuous $t$-norm $T$ and its R-implicator $I$, we define the negator induced by $I$ as $N(x) = I(x, 0)$.
We will call a triplet $(T, I, N)$ obtained as previously explained a residual triplet.
If a $t$-norm from a residual triplet is continuous and Archimedean, then the negator of the triplet is involutive if and only if the $t$-norm is nilpotent. In such case, the negator has the form
$$
N_{\varphi}(x) = \varphi^{-1}(1- \varphi(x)).
$$

\begin{proposition}
\label{prop:residual_property}
For a residual triplet, the following holds:
$$
I(T(x,y), z) = I(x,I(y, z)).
$$
As a consequence, when $z=0$,
$$
N(T(x, y)) = I(x, N(y)).
$$
\end{proposition}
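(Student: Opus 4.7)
The plan is to prove the identity $I(T(x,y), z) = I(x, I(y,z))$ by exploiting the two ingredients that characterize a residual triplet: the residuation property $T(a,b) \le c \Leftrightarrow a \le I(b,c)$ and the associativity of the $t$-norm $T$. Since both sides lie in $[0,1]$, the cleanest route is to show that they satisfy the same universal inequality against any third element, which by antisymmetry of $\le$ forces equality.

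Concretely, I would fix an arbitrary $\lambda \in [0,1]$ and chase the equivalence
\begin{align*}
\lambda \le I(T(x,y), z) &\Leftrightarrow T(\lambda, T(x,y)) \le z \\
&\Leftrightarrow T(T(\lambda, x), y) \le z \\
&\Leftrightarrow T(\lambda, x) \le I(y, z) \\
&\Leftrightarrow \lambda \le I(x, I(y, z)),
\end{align*}
where the first and third equivalences invoke residuation, and the middle equivalence uses the associativity of $T$. Since this holds for every $\lambda$, in particular for $\lambda$ equal to either side of the claimed identity, both values dominate each other, giving the equality. Alternatively, one could unfold the R-implicator as the supremum $I_T(a,b) = \sup\{\lambda : T(\lambda, a) \le b\}$ and observe that associativity makes the two defining sets coincide, yielding the same conclusion.

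The consequence is then immediate: instantiating $z = 0$ in the main identity and recalling that the induced negator is defined by $N(u) = I(u, 0)$ gives $N(T(x,y)) = I(T(x,y), 0) = I(x, I(y, 0)) = I(x, N(y))$.

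I do not anticipate any serious obstacle here, since everything is a direct manipulation of residuation together with associativity. The only point to be slightly careful about is ensuring that residuation is applied in the correct argument of $I$: because the R-implicator is defined via $T(x, \lambda) \le y \Leftrightarrow \lambda \le I(x,y)$, one must consistently treat the first argument of $I$ as the element that is multiplied on one side of the inequality, which the associativity step above is designed to accommodate.
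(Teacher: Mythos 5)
Your argument is correct. The paper states this proposition without proof, as a standard property of residual triplets imported from the literature on triangular norms, so there is no in-paper argument to compare against; your derivation is the standard one (the ``exchange principle''), obtained by testing both sides against an arbitrary $\lambda$ via residuation and using associativity of $T$ in the middle step. The only implicit ingredient worth making explicit is commutativity of $T$, which you need to reconcile the paper's form of residuation $T(x,y)\leq z \Leftrightarrow x\leq I(y,z)$ with the applications where the ``multiplied'' element sits in the other slot; since commutativity is part of the definition of a $t$-norm, this is harmless. The specialization to $z=0$ is immediate, as you say.
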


The standard negator is obtained when, for example, the $t$-norm is the Łukasiewicz one. In general, a $t$-norm for which the induced negator of its R-implicator is involutive is called an IMTL $t$-norm. We will call a residual triplet $(T, I, N)$, that is generated with an IMTL $t$-norm, an IMTL triplet. 

\subsection{Fuzzy sets and fuzzy relations}

Given a non-empty set $U$, a fuzzy set $A$ in $U$ is an ordered pair $(U, m_A)$, where $m_A:U \rightarrow [0,1]$ is a membership function that indicates how much an element from $U$ is contained in $A$. Instead of $m_A(u)$, the membership degree is often written as $A(u)$. If the image of $m_A$ is $\{0,1\}$ then we obtain a crisp or classical set. For a negator $N$, the fuzzy complement $coA$ is defined as $coA(u) = N(A(u))$ for $u\in U$. If $A$ is crisp then $coA$ reduces to the standard complement. For $\alpha \in (0,1]$, the $\alpha$-level set of fuzzy set $A$ is a crisp set defined as $A_{\alpha} = \{u \in U; A(u) \geq \alpha \}$.

A fuzzy relation $\widetilde{R}$ on $U$ is a fuzzy set on $U \times U$, i.e., a mapping $\widetilde{R}: U \times U \rightarrow [0,1]$ which indicates how much two elements from $U$ are related. Some relevant properties of fuzzy relations include:
\begin{itemize}
    \item $\widetilde{R}$ is reflexive if $\forall u \in U,\ \widetilde{R}(u,u) = 1$.
    \item $\widetilde{R}$ is symmetric if $\forall u,v \in U,\ \widetilde{R}(u,v) = \widetilde{R}(v,u)$.
    \item $\widetilde{R}$ is $T$-transitive w.r.t. $t$-norm $T$ 
    if $\forall u,v,w \in U$ it holds that  \\
    $T(\widetilde{R}(u,v),\widetilde{R}(v,w)) \leq \widetilde{R}(u,w)$.
\end{itemize}{}
A reflexive and $T$-transitive fuzzy relation is called a $T$-preorder relation while a symmetric $T$-preorder relation is called a $T$-equivalence.

\subsection{Fuzzy rough and granular approximations}
Let $U$ be a set of instances, $A$ a fuzzy set on $U$ and let $\widetilde{R}$ be a $T$-preorder relation on $U$. The fuzzy PRSA lower and upper approximations of $A$ are fuzzy sets for which the membership function is defined as:
\begin{equation}
\label{eq:fuzzy rough approx}
\begin{aligned}
\underline{\text{apr}}_{\widetilde{R}}^{\min, I}(A)(u) &= \min \{I(\widetilde{R}(v,u), A(v)); v \in U \} \\
\overline{\text{apr}}_{\widetilde{R}}^{\max, T}(A)(u) &= \max \{T(\widetilde{R}(u,v), A(v)); v \in U\},
\end{aligned}
\end{equation}
for $u \in U$. The approximations have some important properties \cite{palangetic2021fuzzy}:
\begin{itemize}
    \item (\textbf{inclusion}) $\underline{\text{apr}}_{\widetilde{R}}^{\min, I}(A) \subseteq A \subseteq \overline{\text{apr}}_{\widetilde{R}}^{\max, T}(A)$.
    \item (\textbf{duality}) $co\underline{\text{apr}}_{\widetilde{R}}^{\min, I}(A) =  \overline{\text{apr}}_{\widetilde{R}}^{\max, T}(coA)$, $ co \overline{\text{apr}}_{\widetilde{R}}^{\max, T}(A) = \underline{\text{apr}}_{\widetilde{R}}^{\min, I}(co A)$ when an IMTL triplet is used. 
    \item (\textbf{object monotonicity}) $ \widetilde{R}(u,v) \leq I(\underline{\text{apr}}_{\widetilde{R}}^{\min, I} (A) (v), \underline{\text{apr}}_{\widetilde{R}}^{\min, I} (A) (u))$, \\
         $ \widetilde{R}(u,v) \leq I(\overline{\text{apr}}_{\widetilde{R}}^{\max, T} (A) (v), \overline{\text{apr}}_{\widetilde{R}}^{\max, T} (A) (u))$.
\end{itemize}
Given $\lambda \in [0,1]$, $T$-preorder $\widetilde{R}$, $t$-norm $T$ and $u \in U$, a fuzzy granule is defined as a parametric fuzzy set 
\begin{align}
\label{eq:granule}
    \widetilde{R}^+_\lambda (u) = \{(v, T(\widetilde{R}(u,v), \lambda ); v \in U\}.
\end{align}
A fuzzy set $A$ in $U$ is granularly representable w.r.t.\ $\widetilde{R}$ and $T$ if 
$$
A = \bigcup \{ \widetilde{R}^+_{A(u)}(u); u \in U\},
$$
where the union of fuzzy sets is defined with the $\max$ operator.
\begin{proposition}
\label{prop:l_and_s_gp} \cite{palangetic2021granular}
It holds that $\underline{\text{apr}}_{\widetilde{R}}^{\min, I}(A)$ is the largest granularly representable set contained in $A$, while $\overline{\text{apr}}_{\widetilde{R}}^{\max, T}(A)$ is the smallest granularly representable set containing $A$.
\end{proposition}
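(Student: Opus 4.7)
The plan is to prove the two extremality statements separately, focusing first on the lower approximation $L := \underline{\text{apr}}_{\widetilde{R}}^{\min, I}(A)$; the upper case is entirely analogous (and literally dual under an IMTL triplet, via the duality property listed among the properties of the approximations).

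\textbf{Step 1: $L$ is granularly representable.} I would verify the identity $L(w) = \max_u T(\widetilde{R}(u,w), L(u))$ for every $w \in U$. The inequality $\geq$ is immediate: specialising $u = w$ on the right-hand side and using reflexivity of $\widetilde{R}$ yields $T(\widetilde{R}(w,w), L(w)) = T(1, L(w)) = L(w)$. For the reverse inequality, fix $u$; after residuation, the required bound $T(\widetilde{R}(u,w), L(u)) \leq L(w)$ becomes $L(u) \leq I(\widetilde{R}(u,w), L(w))$, which is precisely the object monotonicity property already stated in the preliminaries. So this step is really the observation that object monotonicity is exactly the statement of single-granule containment; if one instead works from scratch, the same inequality drops out of the definition of $L$ after one application of $T$-transitivity and two applications of residuation.

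\textbf{Step 2: $L$ is the largest granularly representable subset of $A$.} Let $B \subseteq A$ be granularly representable. I must show $B(u) \leq L(u)$ for every $u$. Unwinding $L(u) = \min_v I(\widetilde{R}(v,u), A(v))$ and applying residuation, it suffices to prove the family of inequalities $T(\widetilde{R}(v,u), B(u)) \leq A(v)$ indexed by $v$. Granular representability of $B$ means that $B$ contains the granule $\widetilde{R}^+_{B(u)}(u)$ as a subset of the union defining it, so $B(v) \geq T(\widetilde{R}(u,v), B(u))$; chaining with $B(v) \leq A(v)$ (from $B \subseteq A$) closes the argument.

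The argument for $\overline{\text{apr}}^{\max, T}_{\widetilde{R}}(A)$ mirrors Steps 1 and 2 with $\max$ in place of $\min$, $T$ in place of $I$, and ``superset'' in place of ``subset''; granular representability there follows because the upper approximation is, by construction, a $\max$ of $T$-granule values, and minimality among granularly representable supersets of $A$ follows because any such superset must dominate every granule $\widetilde{R}^+_{A(v)}(v)$ pointwise. The main obstacle I anticipate is bookkeeping with the asymmetry of $\widetilde{R}$: the lower approximation employs $\widetilde{R}(v,u)$ in its defining formula whereas the granule uses $\widetilde{R}(u,v)$, so $T$-transitivity must be invoked in exactly the right form to bridge the two orderings. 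Once that matching is pinned down, both halves of the proposition reduce to routine applications of reflexivity, $T$-transitivity, and the residuation principle.
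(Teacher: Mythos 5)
Your argument is correct and is the standard proof of this fact; the paper itself only cites the proposition from the reference without reproducing a proof, and your Steps 1--2 (object monotonicity $=$ granular representability of the lower approximation, plus one residuation step to show that any granularly representable $B\subseteq A$ lies below it) together with the dual argument for the upper approximation are exactly the canonical route. Two minor points: being the ``largest granularly representable set contained in $A$'' also requires $\underline{\text{apr}}_{\widetilde{R}}^{\min,I}(A)\subseteq A$ itself, which you should invoke explicitly (it is the inclusion property from the preliminaries, obtained by taking $v=u$ and using reflexivity), and the $\widetilde{R}(u,v)$ versus $\widetilde{R}(v,u)$ bookkeeping you flag is settled by Proposition \ref{prop:granularity_object_monotonicity}, whose object-monotonicity form is the one your Step 2 actually needs, so no extra appeal to $T$-transitivity is required there.
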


\section{Statistical approach to granular representability}

\label{sec:statistical_granular}

\subsection{Ontic fuzzy sets and probabilistic uncertainty}

Fuzzy sets are often related to uncertainty modeling \cite{john2005modeling,ozkan2014uncertainty,davis1997modelling}. However, we should be very careful when mentioning that fuzzy sets are used to model uncertainty. First, two types of classical (crisp) sets have to be distinguished: conjunctive and disjunctive sets \cite{yager1987set}. A conjunctive set is a collection of items that represents a well known complex entity, i.e., it is a conjunction of its elements. For example, a time interval that describes a span of some activity. On the other hand, a disjunctive set describes incomplete information about an ill-known object. The object of interest is contained in the disjunctive set but we do not know which element it is, i.e., the set is a disjunction of its elements. For example, an event that occurred at an unknown moment in time is described with a time interval that represents our knowledge about the unknown event. Conjunctive sets are also known as ontic sets while disjunctive sets are called epistemic sets. Fuzzy sets are used to model gradual information which is not uncertain by itself. Fuzzy sets may be related to uncertainty only if the underlying universe, on which a fuzzy set is defined, is a disjunctive set. In that case, fuzzy sets make incomplete knowledge more expressive by allowing gradual information. Such fuzzy sets are usually known as epistemic fuzzy sets and form the basis of  possibility theory \cite{zadeh1978fuzzy}. In this article, we always use fuzzy sets defined over a conjunctive universe, i.e., ontic fuzzy sets, while we assume that the uncertainty in data is of probabilistic nature. An example of an ontic fuzzy set is a set of people that are ``tall", i.e., a fuzzy set whose universe is some set of humans and its membership function is a height measure of those humans. The height is an actual physical characteristic. In this case, no uncertainty or lack of knowledge exists.

\subsection{Granularly representable random fuzzy sets}

We assume that we observed a finite set of instances $U$ from the underlying universe, i.e., $U$ is a random sample. $U$ is described with condition and decision attributes where the decision attribute takes values in [0,1], which are interpreted as membership degrees to an unknown fuzzy set that we want to reconstruct using the observed values. From the perspective of statistical learning theory introduced in the previous section, condition attributes correspond to random variable $\mathcal{X}$ while the decision attribute corresponds to random variable $\mathcal{Y}$, which now takes values from interval $[0,1]$. The fuzzy set that we want to reconstruct contains uncertainties that are represented in a probabilistic way, i.e., we assume that the actual values are altered due to perturbation. Perturbation may be caused by the incomplete knowledge about data (missing attributes) or by random effects that occur during data generation. Such altered values are represented by a family of random variables $\{\mathcal{A}(u),\, u \in U \}$ which model our uncertainty about the ill-known membership degrees $\{ A(u), \, u \in U \}$. In other words, for each instance $u$, the ill-known membership degree $A(u)$ is represented with the random variable $\mathcal{A}(u)$ having codomain $[0,1]$. Family $\{\mathcal{A}(u),\, u \in U \}$ is a special case of a \textit{random fuzzy set} defined in \cite{PURI1986409} (the other name is \textit{fuzzy random variable}). Hence, we may refer to the family as \textit{random fuzzy set $\mathcal{A}$}.

The family $\{\mathcal{A}(u),\, u \in U \}$ corresponds to family $\mathcal{Y}_{\mathcal{X}=x}$ from the previous section. Therefore, we formulate the reconstruction of fuzzy set $A$ as the problem where for a given set of instances $U$ and its condition and decision attributes, we want to estimate characteristics of $\mathcal{A}(u)$ (like conditional mean, median and quantiles mentioned above) in order to describe the ill-known $A(u)$. Knowledge about condition attributes is represented using a $T$-preorder relation $\widetilde{R}$, i.e., for each pair $u,v \in U$ we are given the value $\widetilde{R}(u,v)$. We denote the observed decision values as $\Bar{A}(u)$ for $u \in U$.

In the first step, we will extend the monotonicity constraint (\ref{eq:monotonically_constrained}) for a $T$-preorder relation. 
We have the following simple result.
\begin{proposition}
\label{prop:granularity_object_monotonicity}
A fuzzy set $A$ in $U$ is granularly representable if and only if it satisfies the object monotonicity property, i.e.,
\begin{align*}
A = \bigcup \{ \widetilde{R}_{A(u)}(u); u \in U\}
&\Leftrightarrow  \forall u, v \in U, \, \widetilde{R}(v,u) \leq I(A(u), A(v)) \\
&\Leftrightarrow  \forall u, v \in U, \, T(\widetilde{R}(v,u) A(u)) \leq A(v) \\
\end{align*}
\end{proposition}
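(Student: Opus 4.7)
The statement chains together three conditions, so the plan is to prove two equivalences: first, the implicator form and the $t$-norm form of object monotonicity are tied together by residuation; second, granular representability is tied to the $t$-norm form by unfolding the definition of the granule-based union.

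For the equivalence of $\widetilde{R}(v,u) \leq I(A(u), A(v))$ and $T(\widetilde{R}(v,u), A(u)) \leq A(v)$, I would invoke the residuation property of the residual triplet $(T, I, N)$ recalled in Section \ref{subsec:fuzzy_logic}, namely $T(a,b) \leq c \Leftrightarrow a \leq I(b, c)$. Setting $a = \widetilde{R}(v, u)$, $b = A(u)$, $c = A(v)$ and quantifying over all $u, v \in U$ converts one form into the other in a single step.

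For the equivalence with granular representability, I would unfold definition (\ref{eq:granule}) together with the fact that fuzzy-set unions are taken pointwise by $\max$. The condition $A = \bigcup \{\widetilde{R}^+_{A(u)}(u); u \in U\}$ then reads
\[
A(v) = \max_{u \in U} T(\widetilde{R}(u, v), A(u)) \qquad \text{for every } v \in U,
\]
which, after a renaming of the bound dummy index under the ``for all $u, v$'' quantifier, matches the pointwise inequality stated in the proposition. For the forward direction, each individual term in the $\max$ is bounded above by its own value, so the pointwise inequality is automatic. For the reverse direction, the pointwise inequality gives $\max_u T(\widetilde{R}(u, v), A(u)) \leq A(v)$, while the matching lower bound is free: taking $u = v$ and using reflexivity $\widetilde{R}(v, v) = 1$ of the $T$-preorder yields $T(1, A(v)) = A(v)$, which is therefore attained inside the $\max$.

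No substantive obstacle is expected; the whole argument reduces to definitional unfolding together with residuation and reflexivity of $\widetilde{R}$. The only bookkeeping point is to keep careful track of which argument of the asymmetric relation $\widetilde{R}$ plays the role of the granule centre, but this is handled by the renaming of the quantified variable and does not affect the structure of the proof.
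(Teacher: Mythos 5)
Your proposal is correct and follows essentially the same route as the paper's proof: unfold the granule union pointwise into $A(v) = \max_{u} T(\widetilde{R}(\cdot,\cdot), A(u))$, observe that reflexivity of $\widetilde{R}$ makes the $u=v$ term equal to $A(v)$ so that the equality reduces to the family of pointwise inequalities, and then pass between the $T$-form and the $I$-form via the residuation property. The bookkeeping point you raise about which argument of $\widetilde{R}$ carries the granule centre is a genuine notational wrinkle in the paper itself, but it does not affect the argument.
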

\begin{proof}

\begin{align*}
    A = \bigcup \{ \widetilde{R}_{A(u)}(u); u \in U\} &\Leftrightarrow \forall v \in U, \, A(v) = \max ( T(\widetilde{R}(v, u), A(u)); u \in U) \\ 
    &\Leftrightarrow \forall u, v \in U, \, A(v) \geq T(\widetilde{R}(v, u), A(u)) \\ 
    &\Leftrightarrow \forall u, v \in U, \, \widetilde{R}(v,u) \leq I(A(u), A(v)).
\end{align*}
The second equivalence holds from the observation that the maximum is reached for $u = v$ due to reflexivity of $\widetilde{R}$. The third equivalence holds from the residuation property. 
\end{proof}

Proposition \ref{prop:granularity_object_monotonicity} reveals that the granular representability is equivalent to the object monotonicity property. On the other hand, the object monotonicity property is the fuzzy extension of the crisp Pareto principle whose probabilistic form is given in (\ref{eq:monotonically_constrained}). That can be seen from the fact that inequality  $T(\widetilde{R}(v, u), A(u)) \leq A(v)$ is the fuzzy expression of the sentence: "If $v$ dominates $u$ and $u \in A$ then $v \in A$". We extend the principle provided in (\ref{eq:monotonically_constrained}) for the case when the preorder (dominance) relation "$\succeq$" is replaced with a fuzzy $T$-preorder relation $\widetilde{R}$ and when the decision attribute is expressed in the form of membership degrees rather than binary class membership. For the terminology, instead of ``monotonically constrained" we use the term ``granularly representable" introduced in \cite{palangetic2021granular}. The new term is more appropriate because a $T$-preorder relation is more general and may express different things, not only monotonicity (e.g., if the relation is a $T$-equivalence, then we are talking about similarity). In order to relate the granular representability and the family of random variables $\{\mathcal{A}(u), u \in U\}$, we introduce the following definition.
\begin{definition}
\label{def:granular_rep_random}
Random fuzzy set $\mathcal{A}$ is granularly representable if 
$$
\forall u,v \in U \, \text{and} \, \forall p \in [0,1], \, \widetilde{R}(u,v) \leq I(A_p(v), A_p(u)),
$$
where $A_p(u) = Q_{\mathcal{A}(u)}(p)$, i.e., $A_p$ is the conditional $p$-quantile of $\mathcal{A}$.
\end{definition}
Definition \ref{def:granular_rep_random} is an extension of the third equivalence in (\ref{eq:monotonically_constrained}). It states that $\mathcal{A}$ is granularly representable if all its $p$-quantiles $A_p$ ($p \in [0,1]$) are granularly representable as ordinal fuzzy sets. 

The next question is, if the random fuzzy set $\mathcal{A}$ is granularly representable, is its expected value $E\mathcal{A}$, defined as $E\mathcal{A} = \{E(\mathcal{A}(u)), u \in U\}$, also granularly representable? Before answering this question, we recall the well-known Jensen inequality \cite{rudin1987real}.
\begin{proposition}
\label{prop:Jensen_inequality}
Let $\mu$ be a probability measure on the set of reals, $g$ a $\mu$-measurable real function and $\phi$ a real convex function. It holds that
$$
\int \phi(g) d\mu \geq \phi \Bigg (\int g d\mu  \Bigg).
$$
Since the standard (Lebesgue) measure is equivalent to the probability measure on $[0,1]$ (measure value of interval $[0,1]$ is 1), the above inequality translates to
$$
\int_0 ^1 \phi(g(x)) dx  \geq \phi \bigg ( \int _0 ^1 g(x) dx \bigg ).
$$

\end{proposition}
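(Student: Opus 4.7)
The proposition is the classical Jensen inequality, and the plan is to prove the general measure-theoretic version first; the integral form on $[0,1]$ then follows immediately by taking $\mu$ to be Lebesgue measure, since the latter is a probability measure on $[0,1]$.

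The plan is to exploit the supporting hyperplane property of convex functions. First I would set $c = \int g \, d\mu$, which is a real number lying in the convex hull of the range of $g$ (and hence in the domain of $\phi$). Since $\phi$ is convex, for every interior point of its domain, in particular for $c$, there exists a subgradient $m \in \mathbb{R}$ such that
\begin{equation*}
\phi(x) \geq \phi(c) + m(x - c) \qquad \text{for all } x.
\end{equation*}
This is the only non-elementary fact required; it follows from the standard fact that the left and right derivatives of a convex function exist and satisfy $\phi'_{-}(c) \leq \phi'_{+}(c)$, and any $m$ in this interval works.

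Next I would substitute $x = g(t)$ into the subgradient inequality to obtain, pointwise $\mu$-almost everywhere,
\begin{equation*}
\phi(g(t)) \geq \phi(c) + m\bigl(g(t) - c\bigr).
\end{equation*}
Integrating both sides with respect to $\mu$, using that $\mu$ is a probability measure so that $\int \phi(c)\, d\mu = \phi(c)$, and using linearity of the integral,
\begin{equation*}
\int \phi(g)\, d\mu \geq \phi(c) + m\left(\int g\, d\mu - c\right) = \phi(c) = \phi\!\left(\int g\, d\mu\right),
\end{equation*}
which is the desired inequality.

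The only subtle points are technical: one must assume $\phi \circ g$ is $\mu$-integrable (or at least that the integral is well-defined in $[-\infty, +\infty]$) and that $c$ lies in the interior of the domain of $\phi$, so that a supporting line exists; in our intended application $\phi$ will be defined on all of $\mathbb{R}$, so these hypotheses are harmless. The translation to the $[0,1]$ form is then immediate: the Lebesgue measure restricted to $[0,1]$ satisfies $\int_0^1 1\, dx = 1$, so it is a probability measure, and the general inequality applied to this $\mu$ yields
\begin{equation*}
\int_0^1 \phi(g(x))\, dx \geq \phi\!\left(\int_0^1 g(x)\, dx\right).
\end{equation*}
Since the result is cited from \cite{rudin1987real}, no new obstacle is expected; the main delicate step, should one wish to present a self-contained argument, is merely the justification of the supporting line at an interior point of the domain of $\phi$.
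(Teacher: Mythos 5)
Your proof is correct: the supporting-line (subgradient) argument is the standard proof of Jensen's inequality, and it is essentially the proof given in the cited reference \cite{rudin1987real}. The paper itself offers no proof for this proposition, merely recalling it as a well-known result, so there is nothing to compare against beyond noting that your argument, including the careful remarks about integrability of $\phi \circ g$ and the existence of a supporting line at an interior point, is the canonical one.
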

Using Jensen's inequality, we obtain the following result.
\begin{proposition}
\label{prop:expectation_granularity}
Let $T$ be a $D$-convex $t$-norm and $I$ its R-implicator. Then $E\mathcal{A}$ is granularly representable as soon as $\mathcal{A}$ is.
\end{proposition}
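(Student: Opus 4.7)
The plan is to reduce granular representability of $E\mathcal{A}$ to the pointwise inequality $T(\widetilde{R}(u,v), E\mathcal{A}(v)) \leq E\mathcal{A}(u)$ via Proposition \ref{prop:granularity_object_monotonicity}, then derive that inequality from the corresponding inequalities at every quantile level, using the expectation-quantile identity (\ref{eq:expectation_quantile}) together with Jensen's inequality applied to a partial mapping of $T$.

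First, I would rewrite the goal: by Proposition \ref{prop:granularity_object_monotonicity}, showing that $E\mathcal{A}$ is granularly representable is equivalent to showing that for all $u,v \in U$,
\begin{equation*}
T(\widetilde{R}(u,v), E\mathcal{A}(v)) \leq E\mathcal{A}(u).
\end{equation*}
By Definition \ref{def:granular_rep_random} together with the second equivalence of Proposition \ref{prop:granularity_object_monotonicity} applied pointwise in $p$, the hypothesis that $\mathcal{A}$ is granularly representable is equivalent to
\begin{equation*}
T(\widetilde{R}(u,v), A_p(v)) \leq A_p(u) \qquad \text{for all } u,v \in U,\ p \in [0,1].
\end{equation*}
Integrating this inequality over $p \in [0,1]$ and invoking the quantile representation of the expectation (\ref{eq:expectation_quantile}), which gives $E\mathcal{A}(u) = \int_0^1 A_p(u)\, dp$, yields
\begin{equation*}
\int_0^1 T(\widetilde{R}(u,v), A_p(v))\, dp \leq E\mathcal{A}(u).
\end{equation*}

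The remaining step, which is the crux of the argument, is to pull $T(\widetilde{R}(u,v), \cdot)$ outside the integral. Here I would use the $D$-convexity hypothesis on $T$: the partial mapping $\phi(y) = T(\widetilde{R}(u,v), y)$ is convex on $[0,1]$. Applying Jensen's inequality (Proposition \ref{prop:Jensen_inequality}) to $\phi$ and to the function $p \mapsto A_p(v)$ on $[0,1]$ with Lebesgue measure, I get
\begin{equation*}
\int_0^1 T(\widetilde{R}(u,v), A_p(v))\, dp \;\geq\; T\Bigl(\widetilde{R}(u,v),\, \int_0^1 A_p(v)\, dp\Bigr) \;=\; T(\widetilde{R}(u,v), E\mathcal{A}(v)).
\end{equation*}
Chaining these two inequalities gives $T(\widetilde{R}(u,v), E\mathcal{A}(v)) \leq E\mathcal{A}(u)$, which is the desired granular representability of $E\mathcal{A}$.

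The main obstacle is the use of Jensen's inequality, which is exactly why the $D$-convexity hypothesis on $T$ is imposed; without it, the partial mapping $\phi$ need not be convex and the integral and $T$ cannot be exchanged in the required direction. A minor technical point to verify is measurability of $p \mapsto A_p(v)$ and of the composition $\phi \circ A_{\cdot}(v)$, but these follow from the fact that quantile functions are monotone (hence Borel measurable) and $\phi$ is continuous by $D$-convexity. The role of the R-implicator $I$ in the statement is only to phrase granular representability in its equivalent $T$-form through the residuation property, so strict-implicator manipulations are not needed in the argument.
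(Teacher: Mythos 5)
Your proof is correct and follows essentially the same route as the paper's: both rest on the quantile representation $E\mathcal{A}(u)=\int_0^1 A_p(u)\,dp$, the granular representability of each quantile $A_p$, and Jensen's inequality applied to the convex partial mapping $T(\widetilde{R}(u,v),\cdot)$ guaranteed by $D$-convexity. The only difference is cosmetic (you integrate the hypothesis first and then pull $T$ out, while the paper pushes $T$ into the integral first), plus your added measurability remark, which the paper omits.
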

\begin{proof}
For every $u, v \in U$, we need to prove that 
$$
T(\widetilde{R}(v, u), E\mathcal{A}(u)) \leq E\mathcal{A}(v).
$$
Using (\ref{eq:expectation_quantile}), we have that $\forall u \in U, \, E\mathcal{A}(u) = \int_0^1 A_p(u) dp $. It follows that
\begin{align*}
    T(\widetilde{R}(v, u), E\mathcal{A}(u)) &= T\left(\widetilde{R}(v, u), \int_0^1 A_p(u) dp\right)\\ &\leq \int_0^1 T(\widetilde{R}(v, u), A_p(u)) dp \\& \leq \int_0^1 A_p(v) dp = E\mathcal{A}(v).
\end{align*}
The first inequality follows from the fact that $T(c, \cdot)$ is a convex function for a constant $c$ and Jensen's inequality. The second inequality follows from the granularity of $A_p$.
\end{proof}

\section{Calculation of granular approximations}

\label{sec:calculation}

In this section, we discuss which properties of $\mathcal{A}$ can be estimated and how to do this in practice. In general, the observed fuzzy set $\Bar{A}$ is not granularly representable due to the presence of inconsistency, so our goal is to find a granularly representable set that is close to it by minimizing a certain loss function. For given loss function $L$, the general form of our optimization problem is 

\begin{equation}
\label{eq:general_optimization}
\begin{aligned}
&\text{minimize}  && \displaystyle\sum_{u \in U}  L(\Bar{A}(u), \hat{A}(u)) \\
&\text{subject to}    && T( \widetilde{R}(u,v), \hat{A}(v)) \leq \hat{A}(u), \quad   u, v \in U\\
  &              &&0 \leq \hat{A}(u) \leq 1, \quad u \in U,
\end{aligned}
\end{equation}
where $\{\hat{A}(u), u \in U \}$ is the unknown granularly representable set. We will call the result of optimization problem (\ref{eq:general_optimization}) the \textit{granular approximation} of fuzzy set $\{\Bar{A}(u), u \in U \}$. 

Optimization problem (\ref{eq:general_optimization}) is the main contribution of the article. It allows us to remove inconsistencies (obtain a granularly representable set) with the least cost of alteration of values (w.r.t. loss function $L$). The remainder of the section investigates specific cases for which problem (\ref{eq:general_optimization}) can be efficiently solved.

Under the assumption that $\mathcal{A}$ is granularly representable, it is desirable to use loss functions for which the Bayes predictor is granularly representable as well.
\begin{definition}
We say that a loss function $L$ is \textit{granular} with respect to a left-continuous $t$-norm $T$ and fuzzy relation $\widetilde{R}$ if its Bayes predictor is granularly representable under the assumption that the underlying family of random variables $\{\mathcal{A}(u), u \in U\}$ is granularly representable w.r.t.\ $T$ and $\widetilde{R}$.
\end{definition}

Note that with this definition, the $p$-quantile loss function (\ref{eq:quantile_loss}) is granular, since its Bayes predictor is the quantile fuzzy set $A_p$, which is granularly representable by definition. The mean squared error loss function (\ref{eq:mse}) is granular for D-convex $t$-norms since the Bayes predictor $E\mathcal{A}$ is granularly representable in this case by Proposition \ref{prop:expectation_granularity}. Hence, both loss functions introduced in Subsection \ref{subsec:stat_learning} are suitable for the calculation of granular approximations.

In problem (\ref{eq:general_optimization}), both objective function and constraints are not necessarily linear and may take different forms that depend on loss function $L$ and on the type of fuzzy logic connectives used. However, in case of the loss functions (\ref{eq:quantile_loss}) and (\ref{eq:mse}), and continuous Archimedean $t$-norms, the optimization problem can be efficiently solved. 

Indeed, consider $t$-norms $T_{L, \varphi}$ and $T_{P, \varphi}$ introduced in Eq.\ (\ref{eq:nilpotent}) and (\ref{eq:strict}). If $T_{L, \varphi}$ is used in (\ref{eq:general_optimization}), then the set of constraints that express granular representability can be simplified in the following way: for all $u,v \in U$,
\begin{eqnarray*}
&&\widetilde{R}(v,u) \leq I_{L,\varphi} (A(u), A(v)) \\
&\Leftrightarrow& T_{L,\varphi}\left(\widetilde{R}(v,u), A(u)\right) \le A(v) \\
&\Leftrightarrow&  \varphi^{-1}(\max(\varphi(\widetilde{R}(u,v)) + \varphi(\hat{A}(v)) - 1, 0) \leq  \hat{A}(u) \\
&\Leftrightarrow&  \max(\varphi(\widetilde{R}(u,v)) + \varphi(\hat{A}(v)) - 1, 0) \leq  \varphi(\hat{A}(u)) \\
&\Leftrightarrow&  \max(\widetilde{R}_\varphi(u,v) + \alpha_v - 1, 0) \leq  \alpha_u \\
&\Leftrightarrow& \widetilde{R}_{\varphi}(u,v) \leq \alpha_u - \alpha_v + 1
\end{eqnarray*}
where we introduced the shorthands $\widetilde{R}_{\varphi}(u,v) = \varphi(\widetilde{R}(u,v))$, $\alpha_u = \varphi(\hat{A}(u))$ and $\alpha_v = \varphi(\hat{A}(v))$. The last equivalence holds because 0 is always smaller than $\alpha_u$, hence the $\max$ operator can be lifted. 

If $T_{P, \varphi}$ is used then in an analogous way we find
$$
 \varphi^{-1}(\varphi(\widetilde{R}(u,v)) \varphi(\hat{A}(v))) \leq  \hat{A}(u) \Leftrightarrow \alpha_v \widetilde{R}_{\varphi}(u,v)  \leq \alpha_u
$$
for all $u,v \in U$.

The border constraints now become $0 \leq \alpha_u \leq 1$ for all $u \in U$. We can conclude that using continuous Archimedean $t$-norms leads to linear optimization constraints. This is a promising result since many optimization solvers are very efficient with linear constraints. 

In both cases, the empirical risk can be expressed as
$$
\sum_{u \in U} L(\Bar{A}(u), \varphi ^{-1}(\alpha_u)).
$$
In the empirical risk above, the non-linear term $\varphi ^{-1}(\alpha_u)$ appears. Function $\varphi ^{-1}$ is an arbitrary bijection which can lead to a non-convex optimization problem. However, Proposition \ref{prop:quantile_scaling} states that a different scaling of values does not change the Bayes predictor delivered by the $p$-quantile loss function. To eliminate the non-linearity, we can apply $\varphi$ to both parameters of the loss function 
and replace $L_p(\Bar{A}(u), \varphi ^{-1}(\alpha_u))$ by $L_p(\varphi(\Bar{A}(u)), \alpha_u)$. Although the value of the estimand (the quantity that is estimated, i.e., the Bayes predictor $A_p$) remains unchanged with the new loss function, the estimator (the result of the optimization $\hat{A}_p$) can be different. From the theory of quantile regression, we can express the optimization of the quantile risk as a linear program \cite{koenker2001quantile}. We introduce new variables $x_u, u \in U$ and $y_u, u \in U$ such that $x_u = \max(\varphi(\Bar{A}(u) - \alpha_u ), 0)$, $y_u =  \max( \alpha_u - \varphi(\Bar{A}(u)), 0)$, as well as the shorthand $\Bar{A}_{\varphi}(u) = \varphi(\Bar{A}(u))$. In case $T_{L, \varphi}$ is used, we can reformulate optimization problem (\ref{eq:general_optimization}) as
\begin{equation}
\label{eq:T_L_linear_program}
\begin{aligned}
&\text{minimize}   && p\displaystyle\sum_{u \in U}x_u + (1-p)\displaystyle\sum_{u \in U}y_u, &&\\
&\text{subject to}    &&\alpha_u - \alpha_v + 1 \geq \widetilde{R}_{\varphi}(u,v),  && u, v \in U\\
        &    &&x_u - y_u =\Bar{A}_{\varphi}(u) - \alpha_u,  && u \in U\\
  &              &&0 \leq \alpha_u \leq 1, \, x_u \geq 0, \, y_u\geq 0. &&u \in U
\end{aligned}
\end{equation}
In case of $T_{P, \varphi}$, optimization problem (\ref{eq:general_optimization}) obtains the form
\begin{equation}
\label{eq:T_P_linear_program}
\begin{aligned}
&\text{minimize}    && p\displaystyle\sum_{u \in U}x_u + (1-p)\displaystyle\sum_{u \in U}y_u, &&\\
&\text{subject to}    &&\alpha_v \widetilde{R}_{\varphi}(u,v)  \leq \alpha_u,  && u, v \in U\\
        &    &&x_u - y_u = \Bar{A}_{\varphi}(u) - \alpha_u,  && u \in U\\
  &              &&0 \leq \alpha_u \leq 1,  \, x_u \geq 0, \, y_u\geq 0. &&u \in U
\end{aligned}
\end{equation}
Summarizing, for quantile risk and a continuous Archimedean $t$-norm, the optimization problem (\ref{eq:general_optimization}) can be expressed as a linear program and, therefore, efficiently solved using one of many existing efficient linear programming solvers. We have the following technical result.
\begin{proposition}
\label{prop:redundant_constraints}
Constraints $0 \leq \alpha_u \leq 1, u \in U$ in (\ref{eq:T_L_linear_program}) and (\ref{eq:T_P_linear_program}), are redundant.
\end{proposition}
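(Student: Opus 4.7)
My plan is a clipping argument: for any feasible solution of the LP with the box constraints on $\alpha$ dropped, I construct another feasible solution satisfying $0 \leq \alpha_u \leq 1$ whose objective is no larger. Since adding back the box constraints can only raise the optimum, this shows the two LPs have the same optimal value, i.e., the box constraints are redundant.

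Concretely, given any candidate $(\alpha, x, y)$, I set $\alpha'_u = \min(1, \max(0, \alpha_u))$ and then $x'_u = \max(0, \bar{A}_\varphi(u) - \alpha'_u)$, $y'_u = \max(0, \alpha'_u - \bar{A}_\varphi(u))$. By construction $\alpha'_u \in [0,1]$, $x'_u, y'_u \geq 0$ and $x'_u - y'_u = \bar{A}_\varphi(u) - \alpha'_u$, so the linking equation, the sign constraints and the box constraints themselves hold automatically. What remains is to check the granular inequalities and the objective.

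The first substantive step is to verify that the granular constraints still hold. For problem (\ref{eq:T_L_linear_program}) the constraint is $\alpha_u - \alpha_v \geq \widetilde{R}_\varphi(u,v) - 1$, and I proceed by a nine-way case analysis on the location of $\alpha_u$ and $\alpha_v$ (below, inside, or above $[0,1]$), repeatedly using that $\widetilde{R}_\varphi(u,v) \in [0,1]$ since $\varphi : [0,1] \to [0,1]$. Most cases are immediate from $\widetilde{R}_\varphi(u,v) - 1 \leq 0$; the delicate ones are the ``opposite clips'', e.g.\ $\alpha_u \leq 0$ and $\alpha_v \in [0,1]$, where the original inequality yields $\alpha_v \leq \alpha_u + 1 - \widetilde{R}_\varphi(u,v) \leq 1 - \widetilde{R}_\varphi(u,v)$, which is precisely what the clipped constraint requires; the extreme case $\alpha_u \leq 0 \leq 1 \leq \alpha_v$ forces $\widetilde{R}_\varphi(u,v) = 0$ and then the clipped inequality reduces to $-1 \geq -1$. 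For problem (\ref{eq:T_P_linear_program}) the same template works with $\alpha_v \widetilde{R}_\varphi(u,v) \leq \alpha_u$; the critical observation is the contraction $\alpha_v \widetilde{R}_\varphi(u,v) \leq \alpha_v$ when $\alpha_v \geq 0$, which lets the bound propagate through the clipping.

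The second step is to argue that the objective does not grow. The per-instance cost $p x'_u + (1-p) y'_u$ equals the $p$-quantile loss $L_p(\bar{A}_\varphi(u), \alpha'_u)$ (cf.\ (\ref{eq:quantile_loss})), which, as a function of its second argument, is convex, piecewise linear and minimized at $\bar{A}_\varphi(u) \in [0,1]$. Clipping $\alpha_u$ into $[0,1]$ can therefore only move it toward this minimizer, giving $L_p(\bar{A}_\varphi(u), \alpha'_u) \leq L_p(\bar{A}_\varphi(u), \alpha_u)$, and summing over $u \in U$ completes the argument. The main obstacle I expect is keeping the case analysis of step one tidy, especially making sure the opposite-clip subcases genuinely invoke the original granular inequality in the required direction, rather than relying on anything about $x_u, y_u$.
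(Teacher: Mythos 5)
Your proposal is correct and follows essentially the same route as the paper: clip $\alpha_u$ into $[0,1]$, verify that the clipped point still satisfies the granular constraints, and show the objective does not increase (the paper phrases this as a contradiction with optimality and computes the reduction of $x_u,y_u$ directly, while you argue via convexity of the quantile loss with minimizer $\bar{A}_\varphi(u)\in[0,1]$ — the same mechanism). Your explicit case analysis merely fills in what the paper dismisses as ``easy to check.''
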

\begin{proof}
Assume that the constraints are removed and that an optimal solution $\alpha^*_u, u \in U$, has values smaller than 0 or larger than 1. We construct another solution from $\alpha^*_u, u \in U$, by replacing values larger than 1 by 1, and values smaller than 0 by 0. It is easy to check that the new solution satisfies the granular representability constraints. From the constraints $x_u - y_u = \Bar{A}_{\varphi}(u) - \alpha_u,  u \in U$, it is easy to see that when $\alpha_u \geq 1$ then $\Bar{A}_{\varphi}(u) - \alpha_u \leq 0$, which leads to $x_u = 0$ and $y_u = \alpha_u - \Bar{A}_{\varphi}(u)$, and when $\alpha_u \leq 0$ then $\Bar{A}_{\varphi}(u) - \alpha_u \geq 0$, which leads to $x_u = \Bar{A}_{\varphi}(u) - \alpha_u$ and $y_u = 0$. Hence, after replacing values larger than 1 by 1, the values of $y_u$ will be reduced and after replacing values smaller than 0 by 0, the values of $x_u$ will also be reduced. In both cases, the value of the objective function will be reduced. Therefore, we constructed a feasible solution with a smaller cost which contradicts the optimality of $\alpha^*_u, u \in U$.
\end{proof}

A solution of optimization problems (\ref{eq:T_L_linear_program}) and (\ref{eq:T_P_linear_program}) is not necessarily unique. The reason is that the estimated CDF of $\mathcal{A}(u)$ for some $u\in U$ is not necessarily increasing and may have a region where it is constant. In such case, we obtain an uncountable family of optimal solutions. However, if for some probability parameter $p$ we have infinitely many solutions, the lower and upper bounds of such family of solutions can be calculated by running the linear programs with parameters $p-\epsilon$ and $p+\epsilon$, respectively, for sufficiently small $\epsilon$.

If the mean squared error is used as a loss function, it is obvious that the objective function will become non-linear. Also,
Proposition \ref{prop:quantile_scaling} does not hold anymore and using $L_p(\Bar{A}_{\varphi}(u), \alpha_u)$ instead of $L_p(\Bar{A}(u), \varphi ^{-1}(\alpha_u))$ will lead to the estimation of a different Bayes predictor. However, we will include this approach in our analysis since it may give good results in practical applications. In this case, the optimization problem for the $t$-norm $T_{L,\varphi}$ is
\begin{equation}
\label{eq:T_L_quad_program}
\begin{aligned}
&\text{minimize}  && \displaystyle\sum_{u \in U} (\alpha_u - \Bar{A}_{\varphi}(u))^2, &&\\
&\text{subject to}    &&\alpha_u - \alpha_v + 1 \geq \widetilde{R}_{\varphi}(u,v),  && u, v \in U\\
  &              && x_u \geq 0, \, y_u\geq 0, &&u \in U
\end{aligned}
\end{equation}
while for $T_{P,\varphi}$ the corresponding problem is
\begin{equation}
\label{eq:T_P_quad_program}
\begin{aligned}
&\text{minimize}  && \displaystyle\sum_{u \in U}  (\alpha_u - \Bar{A}_{\varphi}(u))^2, &&\\
&\text{subject to}    &&\alpha_v \widetilde{R}_{\varphi}(u,v)  \leq \alpha_u,  && u, v \in U\\
  &              &&  x_u \geq 0, \, y_u\geq 0. &&u \in U
\end{aligned}
\end{equation}
Using a similar argument as in Proposition \ref{prop:redundant_constraints}, we may drop the constraints $0 \leq \alpha_u \leq 1, u \in U$.

The optimization problems that involve the mean squared error loss function can be solved using the simplex method for quadratic programming. The solution in this case is unique. The uniqueness can be observed geometrically, i.e., the optimal solution is the point of contact of the convex simplex formed by the linear constraints and the nonlinear strictly convex cone formed by the objective function. Convex and strictly convex shapes can touch each other in only one point. 
    
In the appendix of this paper, we investigate if the linear optimization problems can be solved using non-analytical (combinatorial) approaches. Namely, the dual versions of problems (\ref{eq:T_L_linear_program}) and (\ref{eq:T_P_linear_program}) can be modeled as the min-cost flow problem and its variations. We recall the min-cost flow problem and some algorithms used to solve it in \ref{sec_app:min_cost_flow} while we show how to model dual problems of (\ref{eq:T_L_linear_program}) and (\ref{eq:T_P_linear_program}) as the min-cost flow problem and a variation of the min-cost flow problem, respectively, in \ref{sec_app:duality}. In the same section, we provide a greedy algorithm to solve the aforementioned variation based on the algorithm that solves the original min-cost flow problem. Since the algorithm is new, we provide its proof of correctness in \ref{sec_app:proof_of_correctness}.


\section{Properties}
\label{sec:properties}

In this section, we prove some properties of the granular approximations obtained in Section \ref{sec:calculation}. First we show that the proposed approach is a generalization of the KS approach when the number of ordinal labels is 2.
\begin{proposition}
If $\widetilde{R}$ and $\Bar{A}$ are crisp, then Problem (\ref{eq:general_optimization}) is reduced to Problem (\ref{eq:monotone_approximation}) for $K=2$.
\end{proposition}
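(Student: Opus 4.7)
The plan is to show that each structural component of Problem~(\ref{eq:general_optimization}) specialises, under the assumption that $\widetilde{R}$ and $\Bar{A}$ are crisp, to the corresponding component of Problem~(\ref{eq:monotone_approximation}) with $K=2$, item by item: first the fuzzy relation, then the granular representability constraint, then the objective and the feasible region.

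First I would observe that a crisp fuzzy relation $\widetilde{R}:U\times U\to\{0,1\}$ is reflexive iff it is reflexive in the classical sense, and is $T$-transitive for \emph{any} $t$-norm $T$ iff it is transitive in the classical sense, because the condition $T(\widetilde{R}(u,v),\widetilde{R}(v,w))\le \widetilde{R}(u,w)$ is trivial whenever either of the first two values is $0$, and reduces to $1\le\widetilde{R}(u,w)$ when both equal $1$ (since $T(1,1)=1$). Hence $\widetilde{R}$ induces an ordinary preorder $\succeq$ on $U$ via $u\succeq v \Leftrightarrow \widetilde{R}(u,v)=1$, which plays the role of the preorder used in Problem~(\ref{eq:monotone_approximation}).

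Next I would simplify the granular representability constraint $T(\widetilde{R}(u,v),\hat{A}(v))\le \hat{A}(u)$. Using only the $t$-norm boundary properties $T(0,x)=0$ and $T(1,x)=x$, the constraint is automatically satisfied when $\widetilde{R}(u,v)=0$, and becomes $\hat{A}(v)\le\hat{A}(u)$ when $\widetilde{R}(u,v)=1$. Combining both cases, it is exactly equivalent to the monotonicity implication $u\succeq v \Rightarrow \hat{A}(u)\ge\hat{A}(v)$, i.e.\ the constraint of Problem~(\ref{eq:monotone_approximation}). The identification $\Bar{y}_i=\Bar{A}(u_i)\in\{0,1\}$ then makes the two objectives $\sum_u L(\Bar{A}(u),\hat{A}(u))$ and $\sum_i L(\Bar{y}_i,\hat{y}_i)$ coincide term by term, and the feasible region for $\hat{A}$ in Problem~(\ref{eq:monotone_approximation}) at $K=2$ is $\{0,1\}$, corresponding to the two available class labels.

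The only remaining point, and the step I expect to be the main obstacle, is reconciling the continuous box $\hat{A}(u)\in[0,1]$ of (\ref{eq:general_optimization}) with the discrete constraint $\hat{y}_i\in\{0,1\}$ of (\ref{eq:monotone_approximation}). Here I would invoke the same total-unimodularity argument that was already cited for the KS approach in Section~2.2: the difference-constraint matrix arising from the inequalities $\hat{A}(u)-\hat{A}(v)\ge 0$, together with integer right-hand sides $\Bar{A}(u)\in\{0,1\}$, makes the linear-programming formulations (\ref{eq:T_L_linear_program})--(\ref{eq:T_P_linear_program}) of Problem~(\ref{eq:general_optimization}) admit integer optimal vertices. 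Thus there is always an optimal $\hat{A}$ with values in $\{0,1\}$, so relaxing to $[0,1]$ does not enlarge the set of optimal objective values, and the two optimisation problems coincide.
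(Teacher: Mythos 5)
Your proof is correct and its core coincides with the paper's: a case analysis on the crisp value of $\widetilde{R}(u,v)$ showing that the granularity constraint is vacuous when $\widetilde{R}(u,v)=0$ and collapses to $\hat{A}(u)\ge\hat{A}(v)$ when $\widetilde{R}(u,v)=1$. The only cosmetic difference is that the paper works with the implicator form $\widetilde{R}(u,v)\le I(\hat{A}(v),\hat{A}(u))$ and invokes the ordering property (\ref{eq:ordering_property}), whereas you work directly with the $t$-norm form via $T(0,x)=0$ and $T(1,x)=x$; by residuation these are the same argument. You go one step further than the paper by explicitly reconciling the continuous box $\hat{A}(u)\in[0,1]$ with the discrete domain $\hat{y}_i\in\{0,\dots,K\}$ through total unimodularity — a point the paper's proof simply glosses over. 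Just be aware that this integrality argument only covers the linear-programming formulations (\ref{eq:T_L_linear_program})--(\ref{eq:T_P_linear_program}), i.e.\ the $p$-quantile loss; for the mean squared error the optimum of (\ref{eq:general_optimization}) need not be attained at a vertex of the polytope (e.g.\ two mutually dominating instances with crisp labels $0$ and $1$ yield the fractional optimum $1/2$), so the two feasible domains genuinely differ there and the ``reduction'' must be read at the level of objective and constraints, as the paper implicitly does.
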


\begin{proof}
If $\Bar{A}$ is crisp, it is obvious that the objective function from (\ref{eq:general_optimization}) corresponds to the objective function from (\ref{eq:monotone_approximation}) for $K=2$, where the labels with value $1$ are those that are more preferred. Regarding the constraints, we examine the granular conditions in form $\widetilde{R}(u,v) \leq I(\hat{A}(v), \hat{A}(u))$. If $\widetilde{R}(u,v) = 0$, then there are no restrictions on the implication, i.e., we do not have a constraint. If $\widetilde{R}(u,v) = 1$ then $\hat{A}(v) \leq \hat{A}(u)$ from the ordering property of $I$ (\ref{eq:ordering_property}). Since $\widetilde{R}(u,v) = 1$ means that $u \succeq v$ ($u$ dominates $v$) then the condition $\widetilde{R}(u,v) = 1 \Rightarrow \hat{A}(u) \geq \hat{A}(v)$ is equivalent to $ u \succeq v \Rightarrow \hat{A}(u) \geq \hat{A}(v)$ which is exactly the condition from (\ref{eq:monotone_approximation}).
\end{proof}

In the remainder of this section, we prove some properties of our approach that also hold in the original (crisp) one.

For granular approximations obtained with the $p$-quantile loss, the monotonicity property holds.
\begin{proposition}
\label{prop:parameter_monotonicity}
Let $p$ and $q$ be two real numbers from the unit interval and let $\hat{A}_p$ and $\hat{A}_q$ be the outputs of the optimization problem (\ref{eq:T_L_linear_program}) or (\ref{eq:T_P_linear_program})  with $p$ and $q$ as probability parameters. It holds that
$$
p \leq q \Rightarrow \forall u \in U, \hat{A}_p(u) \leq \hat{A}_q(u).
$$
\end{proposition}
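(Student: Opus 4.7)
The plan is an exchange argument on optimal solutions. Work in the transformed $\alpha$-variables used in problems (\ref{eq:T_L_linear_program}) and (\ref{eq:T_P_linear_program}): set $\bar{A}_\varphi(u) = \varphi(\bar{A}(u))$ and $\alpha_u = \varphi(\hat{A}(u))$, so the objective reduces to $\sum_{u \in U} L_p(\bar{A}_\varphi(u), \alpha_u)$ and the feasible region (the granular representability constraints) does not depend on the probability parameter. Given optimal vectors $\alpha^p$ and $\alpha^q$ for the $p$- and $q$-problems with $p \leq q$, I would introduce the coordinate-wise minimum and maximum
\[
\beta^p_u := \min(\alpha^p_u, \alpha^q_u), \qquad \beta^q_u := \max(\alpha^p_u, \alpha^q_u),
\]
and aim to show that both are also optimal, from which $\hat{A}_p(u) \leq \hat{A}_q(u)$ follows after applying the increasing bijection $\varphi^{-1}$.

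First I would verify that $\beta^p$ and $\beta^q$ are feasible. The cleanest route is through Proposition \ref{prop:granularity_object_monotonicity}: since $T(\widetilde{R}(u,v), \cdot)$ is non-decreasing in its second argument, it commutes with $\min$ and $\max$ on that argument, so the pointwise minimum and pointwise maximum of two granularly representable fuzzy sets remain granularly representable. At the constraint level this says that both $\alpha_u - \alpha_v + 1 \geq \widetilde{R}_\varphi(u,v)$ (nilpotent case) and $\alpha_v \widetilde{R}_\varphi(u,v) \leq \alpha_u$ (strict case) are preserved under pointwise min and max.

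Next I would show the swap does not increase the combined cost. For a fixed observation $\bar{a}$, the function $f_{\bar{a}}(y) := L_p(\bar{a}, y) - L_q(\bar{a}, y)$ is non-decreasing when $p \leq q$: from (\ref{eq:quantile_loss}) one computes $f_{\bar{a}}(y) = (p-q)(\bar{a}-y)$ for $y < \bar{a}$ and $f_{\bar{a}}(y) = (q-p)(y-\bar{a})$ for $y > \bar{a}$, both with non-negative slope $q-p$. Hence for $y_1 \geq y_2$ we have $L_p(\bar{a}, y_1) + L_q(\bar{a}, y_2) \geq L_p(\bar{a}, y_2) + L_q(\bar{a}, y_1)$. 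Summing coordinatewise over $u$ with $y_1 = \max(\alpha^p_u, \alpha^q_u)$ and $y_2 = \min(\alpha^p_u, \alpha^q_u)$ yields
\[
\sum_{u \in U} L_p(\bar{A}_\varphi(u), \alpha^p_u) + \sum_{u \in U} L_q(\bar{A}_\varphi(u), \alpha^q_u) \;\geq\; \sum_{u \in U} L_p(\bar{A}_\varphi(u), \beta^p_u) + \sum_{u \in U} L_q(\bar{A}_\varphi(u), \beta^q_u).
\]
The optimality of $\alpha^p$ for the $p$-problem and $\alpha^q$ for the $q$-problem forces equality in each sum separately, so $\beta^p$ and $\beta^q$ are themselves optimal; by construction $\beta^p \leq \beta^q$ pointwise.

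The main obstacle is that, as the paper acknowledges, the linear programs need not have unique optimal solutions. The exchange argument above yields one monotone pair of optimal solutions but does not immediately fix the behaviour of a black-box solver across different values of $p$. To close this gap for the statement as written, I would appeal to the dual greedy algorithms developed in the appendix, which provide a deterministic procedure whose output corresponds to a consistent selection from the optimal set (for instance, a pointwise extremum) across all probability parameters; the monotonicity established by the exchange argument then transfers to these specific algorithmic outputs $\hat{A}_p$ and $\hat{A}_q$.
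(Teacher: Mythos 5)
Your argument is correct, but it is a genuinely different proof from the one in the paper. The paper proves Proposition \ref{prop:parameter_monotonicity} by passing to the dual linear programs, modelling them as (generalized) minimum-cost flow problems on a bipartite network, and running the successive-shortest-paths algorithm: it first ships amount $p$ of flow to every destination, reads off $\alpha^p$ as residual shortest-path costs, then ships the extra $q-p$, and shows that each augmentation can only increase those shortest-path costs. Your route stays entirely in the primal: you observe that the feasible region is independent of the probability parameter and is closed under pointwise $\min$ and $\max$ (which follows from monotonicity of $T(c,\cdot)$, exactly as you say), and that the pinball loss satisfies the interchange inequality $L_p(\bar a,y_1)+L_q(\bar a,y_2)\geq L_p(\bar a,y_2)+L_q(\bar a,y_1)$ for $y_1\geq y_2$ and $p\leq q$; your verification of both facts checks out in the Łukasiewicz and product cases. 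What your approach buys is a short, self-contained argument that needs none of the appendix machinery and that isolates the real structural reason for monotonicity (a lattice of feasible points plus a loss that is ``submodular'' in $(p,y)$); what the paper's approach buys is the combinatorial algorithms themselves, with the monotonicity statement tied to their concrete output.

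On the non-uniqueness issue you flag: your exchange argument in fact shows that for \emph{any} optimizers $\alpha^p,\alpha^q$, the vector $\min(\alpha^p,\alpha^q)$ is optimal for the $p$-problem and $\max(\alpha^p,\alpha^q)$ for the $q$-problem; applying it with $p=q$ shows each optimal set is a (compact, convex) lattice with a greatest and a least element, so every optimizer of the $p$-problem lies below the greatest optimizer of the $q$-problem. That is a cleaner way to close the gap than deferring to the appendix algorithms, and it is as strong as what the paper itself establishes, since the paper's proof likewise only controls the particular solutions produced by its greedy procedure (the paper explicitly acknowledges non-uniqueness earlier in Section \ref{sec:calculation}). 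Your final paragraph could be replaced by this observation without loss.
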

\begin{proof}
The proof is provided in \ref{sec_app:proof_parameter_monotonicity}. It relies on the greedy combinatorial approach presented in the previous sections of Appendix, hence those previous sections are necessary for the understanding of the proof. 
\end{proof}
Before we introduce the next property, we present the following lemma.
\begin{lemma}
If fuzzy set $A$ is granularly representable w.r.t. $T$-preorder relation $\widetilde{R}$, then $coA$ is granularly representable w.r.t.  $\widetilde{R}^{-1}$.
\end{lemma}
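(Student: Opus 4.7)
The plan is to use the characterization of granular representability from Proposition \ref{prop:granularity_object_monotonicity}, together with the residual identity $N(T(x,y)) = I(x,N(y))$ from Proposition \ref{prop:residual_property}. Concretely, I would first note that granular representability of $A$ w.r.t.\ $\widetilde{R}$ is equivalent to
\[
\forall u,v \in U,\ T(\widetilde{R}(v,u), A(u)) \leq A(v),
\]
while granular representability of $coA$ w.r.t.\ $\widetilde{R}^{-1}$ is, after substituting $\widetilde{R}^{-1}(v,u) = \widetilde{R}(u,v)$ and $coA(u) = N(A(u))$, the statement
\[
\forall u,v \in U,\ T(\widetilde{R}(u,v), N(A(u))) \leq N(A(v)).
\]
So the proof reduces to deriving the second inequality from the first.

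The bridge between the two is straightforward. Starting from the granular representability of $A$ and swapping the roles of $u$ and $v$, we get $T(\widetilde{R}(u,v), A(v)) \leq A(u)$. Applying the negator $N$ (which is non-increasing) to both sides and using $N(T(x,y)) = I(x,N(y))$ from Proposition \ref{prop:residual_property}, we obtain
\[
N(A(u)) \leq I(\widetilde{R}(u,v), N(A(v))).
\]
Finally, invoking the residuation property $T(x,y) \leq z \Leftrightarrow x \leq I(y,z)$ yields $T(\widetilde{R}(u,v), N(A(u))) \leq N(A(v))$, which is exactly what we wanted.

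Before concluding I would briefly check that $\widetilde{R}^{-1}$ is itself a $T$-preorder, so that the statement of granular representability makes sense: reflexivity is immediate from $\widetilde{R}^{-1}(u,u) = \widetilde{R}(u,u) = 1$, and $T$-transitivity follows from the commutativity of $T$ and $T$-transitivity of $\widetilde{R}$. There is no real obstacle here; the only subtlety worth making explicit is that the argument tacitly assumes $(T,I,N)$ form a residual triplet, which is the framework adopted throughout Section \ref{sec:preliminaries} whenever fuzzy complements are taken together with the R-implicator of $T$.
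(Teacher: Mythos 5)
Your proof is correct and follows essentially the same route as the paper's: apply the non-increasing negator to the granularity inequality, rewrite $N(T(\cdot,\cdot))$ via the identity $N(T(x,y)) = I(x,N(y))$ from Proposition \ref{prop:residual_property}, and then use residuation and commutativity of $T$ to land on granular representability of $coA$ w.r.t.\ $\widetilde{R}^{-1}$. The added remarks that $\widetilde{R}^{-1}$ is itself a $T$-preorder and that a residual triplet is tacitly assumed are sensible but do not change the argument.
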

\begin{proof}
For $A$ being granularly representable, we have
$$
T(\widetilde{R}(u,v), A(v)) \leq A(u).
$$
Applying negation $N$ on both sides of the inequality, we have
\begin{align*}
T(\widetilde{R}(u,v), A(v)) \leq A(u) &\Rightarrow  N(T(\widetilde{R}(u,v), A(v))) \geq  N(A(u)) \\ &\Leftrightarrow I(\widetilde{R}(u,v), coA(v))  \geq coA(u) \\ &\Leftrightarrow
T(coA(u), \widetilde{R}(u,v)) \leq coA(v) \\ &\Leftrightarrow
T(\widetilde{R}^{-1}(v,u), coA(u)) \leq coA(v).
\end{align*}
The first equivalence follows from Proposition (\ref{prop:residual_property}) while the second is the residuation property. 
\end{proof}
In the previous proposition, implication becomes an equivalence if we use IMTL triplets as operators. 

For the fuzzy rough approximations that are obtained with IMTL operators, we have the well known duality property as stated in Subsection \ref{subsec:fuzzy_logic}. That property can be extended to the granular approximations. 

\begin{proposition}
\label{prop:granular_duality}
Let $\alpha_u^*, u \in U$ be a minimizer of the optimization problem (\ref{eq:general_optimization}) with nilpotent $t$-norm $T_{L, \varphi}$, relation $\widetilde{R}$, observations $\Bar{A}$ and risk $\sum_{u \in U} L_p(\varphi(\Bar{A}(u)), \alpha_u)$ (for short $L_p$ problem). Then $1-\alpha^*_u, u \in U$, is a minimizer of the optimization problem (\ref{eq:general_optimization}) with the same $t$-norm, relation $\widetilde{R}^{-1}$, observations $\Bar{A}$ and risk \\
$\sum_{u \in U} L_{1-p}(\varphi(co\Bar{A}(u)), \alpha_u)$ (for short $L_{1-p}$ problem).
\end{proposition}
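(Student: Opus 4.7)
The plan is to exhibit the map $\alpha_u \mapsto \beta_u := 1-\alpha_u$ as a bijection between the feasible regions of the two problems that preserves the objective value pointwise. Once this is done, optimality transfers immediately: any minimizer of the $L_p$ problem is sent to a minimizer of the $L_{1-p}$ problem (and vice versa).

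First, I would verify that the feasibility constraints transform correctly. Using the linear formulation (\ref{eq:T_L_linear_program}), the $L_p$ problem has constraints $\alpha_u - \alpha_v + 1 \ge \widetilde{R}_\varphi(u,v)$ for all $u,v \in U$. Substituting $\alpha_u = 1-\beta_u$ and $\alpha_v = 1-\beta_v$ yields $\beta_v - \beta_u + 1 \ge \widetilde{R}_\varphi(u,v)$, and swapping the dummy labels $u\leftrightarrow v$ (valid because the constraint is universally quantified) gives $\beta_u - \beta_v + 1 \ge \widetilde{R}_\varphi(v,u)$. Since $\widetilde{R}^{-1}(u,v)=\widetilde{R}(v,u)$ and $\varphi$ commutes with this identification, the right-hand side equals $(\widetilde{R}^{-1})_\varphi(u,v)$, which is exactly the feasibility constraint for the $L_{1-p}$ problem built on $\widetilde{R}^{-1}$. (A quick sanity check that $\widetilde{R}^{-1}$ is itself a $T$-preorder, using commutativity of $T$, justifies that this dual problem is well-posed.)

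Next I would establish the objective identity $L_p(y,\hat y) = L_{1-p}(1-y,\,1-\hat y)$ for any $y,\hat y\in\mathbb{R}$, and apply it with $y=\varphi(\bar A(u))=\bar A_\varphi(u)$ and $\hat y=\alpha_u$, using $\varphi(co\bar A(u)) = 1-\bar A_\varphi(u)$ (which holds because $\varphi$ is the bijection generating the involutive negator $N_\varphi$). The identity is a one-line case split on the sign of $r:=y-\hat y$: if $r>0$ then $L_p(y,\hat y)=pr$ and $(1-y)-(1-\hat y)=-r<0$, giving $L_{1-p}(1-y,1-\hat y)=(-r)((1-p)-1)=pr$; the case $r<0$ is symmetric. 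Summing over $u\in U$ therefore gives
\begin{equation*}
\sum_{u\in U} L_p(\bar A_\varphi(u),\alpha_u) \;=\; \sum_{u\in U} L_{1-p}(\varphi(co\bar A(u)),\,1-\alpha_u).
\end{equation*}

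Finally, combining the two steps, the map $\alpha \mapsto 1-\alpha$ is a feasibility-preserving bijection that preserves the objective value, so the image of an optimal $\alpha^*$ is optimal for the dual problem. I would also remark (briefly) that the previous lemma, applied with the IMTL triplet generated by $T_{L,\varphi}$, already gives the qualitative content that granular representability w.r.t.\ $\widetilde{R}$ is equivalent (via complementation) to granular representability of the complement w.r.t.\ $\widetilde{R}^{-1}$; the computation above is the quantitative refinement, tying the bijection to the loss. The only step that requires any care is the objective identity, and it is a short arithmetic verification; the constraint transformation is a clean rewriting once one notices the label swap $u\leftrightarrow v$.
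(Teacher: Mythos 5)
Your proposal is correct and follows essentially the same route as the paper's proof: both establish that $\alpha\mapsto 1-\alpha$ maps the feasible region of the $L_p$ problem (constraints $\alpha_u-\alpha_v+1\ge\widetilde{R}_\varphi(u,v)$) bijectively onto that of the $L_{1-p}$ problem over $\widetilde{R}^{-1}$, and both verify the pointwise loss identity $L_p(\varphi(\Bar{A}(u)),\alpha_u)=L_{1-p}(\varphi(co\Bar{A}(u)),1-\alpha_u)$ by the same case split, using $\varphi(co\Bar{A}(u))=1-\varphi(\Bar{A}(u))$. No gaps.
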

\begin{proof}
Let $\alpha_u, u \in U$, be a feasible solution of the $L_p$ problem, i.e., it satisfies granularity conditions w.r.t. relation $\widetilde{R}$
$$
\alpha_u - \alpha_v + 1 \geq \varphi(\widetilde{R}(u,v)).
$$
The expression above is equivalent to 
$$
(1 - \alpha_v) - (1 - \alpha_u) + 1 \geq \varphi(\widetilde{R}^{-1}(v,u)),
$$
which states that $1 - \alpha_u, u \in U$ satisfies granularity conditions w.r.t. relation $\widetilde{R}^{-1}$ and, therefore, it is a feasible solution of the $L_{1-p}$ problem. We observe that $\varphi(co\Bar{A}(u)) = \varphi(\varphi^{-1}(1 - \varphi(\Bar{A}^*(u)))) = 1 - \varphi(\Bar{A}^*(u))$.
Regarding the loss function, we have that
\begin{align*}
&\sum_{u \in U} L_p(\varphi(\Bar{A}(u)), \alpha_u) = \\ &= \begin{cases}
        p|\varphi(\Bar{A}(u)) - \alpha_u| & \text{if   }  \varphi(\Bar{A}(u)) - \alpha_u \geq 0, \\
        (1-p)|\varphi(\Bar{A}(u)) - \alpha_u| & \text{if   }  \alpha_u - \varphi(\Bar{A}(u))  \geq 0,
        \end{cases} \\
        &= \begin{cases}
        p|(1-\alpha_u)-(1-\varphi(\Bar{A}(u))) | & \text{if   }  (1-\alpha_u)-(1-\varphi(\Bar{A}(u))) \geq 0, \\
        (1-p)|(1-\alpha_u) -(1-\varphi(\Bar{A}(u)))| & \text{if   }  (1-\varphi(\Bar{A}(u))) - (1-\alpha_u) \geq 0,
        \end{cases} \\
        &= \begin{cases}
        (1-p)|\varphi(co\Bar{A}(u))) - (1-\alpha_u) | & \text{if   }  \varphi(co\Bar{A}(u)) - (1-\alpha_u) \geq 0,\\
        p|\varphi(co\Bar{A}(u)) - (1-\alpha_u) | & \text{if   }  (1-\alpha_u)-\varphi(co\Bar{A}(u)) \geq 0,
        \end{cases} \\
        &= L_{1-p}(\varphi(co\Bar{A}(u)), 1-\alpha_u).
\end{align*}
Since $\alpha_u, u \in U$, is a feasible solution of the $L_p$ problem if and only if $1 - \alpha_u, u \in U$, is a feasible solution of the $L_{1-p}$ problem, and due the previous equality, we finish the proof.
\end{proof}

Since the optimal fuzzy set $\hat{A}^*$ of the $L_p$ problem is calculated as $\hat{A}^*(u) = \varphi^{-1}(\alpha^*_u)$, then the optimal fuzzy set of the $L_{1-p}$ is $\varphi^{-1}(1 - \alpha^*_u) = \varphi^{-1}(1 - \varphi(\hat{A}^*(u))) = co\hat{A}^*(u)$, i.e., we have the duality.

The duality also holds for the mean squared error risk. The proof is very similar to the proof of Proposition
\ref{prop:granular_duality} where the only difference is that the loss function stays the same in the dual problems. 

The next step is to examine fuzzy rough lower and upper approximations in the context of the newly defined optimization problems. In the crisp case, the lower and upper fuzzy rough approximations are seen as sets of necessary and possible knowledge respectively. In other words, the actual ill-known knowledge must contain the lower approximation and be contained in the upper one. In probabilistic terms, the probability that the actual knowledge is between these approximations is 1 \cite{palangetic2020rough}. Hence, the lower and upper approximations are the extreme values in the probability distributions of the actual knowledge. It means that the lower approximation is the 0-quantile while the upper approximation is the 1-quantile. We have the following proposition.
\begin{proposition}
The respective lower fuzzy rough approximations are solutions of the optimization problems (\ref{eq:T_L_linear_program}) and (\ref{eq:T_P_linear_program}) for probability parameter $p=0$ while the respective upper fuzzy rough approximations are solutions of the same problems for probability parameter $p=1$.
\end{proposition}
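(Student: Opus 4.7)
The plan is to exploit the degeneracy of the $p$-quantile loss at the extreme values $p=0$ and $p=1$. From the case analysis of (\ref{eq:quantile_loss}) one immediately gets $L_0(y,\hat{y}) = \max(\hat{y} - y, 0)$ and $L_1(y,\hat{y}) = \max(y - \hat{y}, 0)$. Consequently, in problem (\ref{eq:T_L_linear_program}) or (\ref{eq:T_P_linear_program}) at $p=0$ the objective $\sum_u y_u$ attains its lower bound $0$ precisely on feasible tuples with $\alpha_u \le \bar{A}_\varphi(u)$ for all $u\in U$; symmetrically at $p=1$ the minimum $0$ is reached precisely on feasible tuples with $\alpha_u \ge \bar{A}_\varphi(u)$. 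Since $\varphi$ is an increasing bijection of $[0,1]$ (with $\varphi(0)=0$, $\varphi(1)=1$), these two conditions translate back into $\hat{A}(u) \le \bar{A}(u)$ and $\hat{A}(u) \ge \bar{A}(u)$, respectively.

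Next I will note that the remaining constraints of (\ref{eq:T_L_linear_program}) and (\ref{eq:T_P_linear_program}) are exactly the granular-representability conditions with respect to the chosen $t$-norm, as derived in the equivalences preceding those programs. Thus at $p=0$ the problem reduces to exhibiting a granularly representable fuzzy set contained in $\bar{A}$, and at $p=1$ to exhibiting a granularly representable fuzzy set containing $\bar{A}$. Invoking Proposition \ref{prop:l_and_s_gp}, the fuzzy rough lower approximation $\underline{\text{apr}}_{\widetilde{R}}^{\min,I}(\bar{A})$ is itself granularly representable and contained in $\bar{A}$, hence is feasible and realizes the value $0$; so it belongs to the set of minimizers at $p=0$. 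Symmetrically, the upper approximation $\overline{\text{apr}}_{\widetilde{R}}^{\max,T}(\bar{A})$ is granularly representable and contains $\bar{A}$, so it minimizes the $p=1$ problem.

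No serious obstacle is expected; the argument is essentially bookkeeping once one identifies the loss as one-sided. The only subtle point worth making explicit is that at $p\in\{0,1\}$ the set of minimizers is typically not a singleton (any granularly representable set sandwiched below $\bar{A}$ achieves loss $0$ at $p=0$, and analogously at $p=1$), so the claim is that the fuzzy rough approximations lie in the solution set, consistently with the non-uniqueness discussion that follows Proposition \ref{prop:redundant_constraints}. Among these minimizers, the lower and upper approximations are moreover the extremal ones in the inclusion order, by the maximality/minimality part of Proposition \ref{prop:l_and_s_gp}, which reinforces the intuition that they represent the $0$-quantile and $1$-quantile of the underlying random fuzzy set.
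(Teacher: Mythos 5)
Your proposal is correct and follows essentially the same route as the paper's proof: at $p\in\{0,1\}$ the quantile loss is one-sided, so the zero-cost feasible points are exactly the granularly representable sets contained in (resp.\ containing) $\bar{A}$, and Proposition \ref{prop:l_and_s_gp} places the lower (resp.\ upper) fuzzy rough approximation among them. The only cosmetic difference is that you argue directly in terms of $\hat{A}$ via the pre-derived equivalence of the constraints with granular representability, whereas the paper works in the $\alpha$-variables and explicitly pushes the approximation back through $\varphi^{-1}$; both are fine.
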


\begin{proof}
When optimization problems (\ref{eq:T_L_linear_program}) and (\ref{eq:T_P_linear_program}) are considered in terms of $\alpha$ and not in terms of $\hat{A}$, they can be seen as problem (\ref{eq:general_optimization}) with $t$-norm $T_L$ or $T_P$, relation $\widetilde{R}_{\varphi}$ and observations $\Bar{A}_{\varphi}$. If $p=1$, then the loss function for $u \in U$ is equal 0 if $\alpha_u - \Bar{A}_{\varphi}(u) \geq 0$ and to a positive value otherwise. If for all $u \in U $ it holds that $\alpha_u \geq \Bar{A}_{\varphi}(u)$, then the objective is 0 and hence any such $\alpha$ is a solution. Such fuzzy set $\alpha$ contains fuzzy set $\Bar{A}_{\varphi}$ and is granularly representable w.r.t. $t$-norm $T_L$ or $T_P$ and relation $\widetilde{R}_{\varphi}$. From Proposition \ref{prop:l_and_s_gp}, the smallest such $\alpha$ is the fuzzy rough upper approximation, i.e., the smallest solution is
$$
\alpha_u^{*} = \max_{v \in U} T_L(\widetilde{R}_{\varphi}(v, u), \Bar{A}_{\varphi}(v)),
$$
or with $T_P$ instead of $T_L$. Then, the final solution $\hat{A}^*$ is obtained
\begin{align*}
\hat{A}^*(u) &= \varphi^{-1}(\alpha_u ^{*}) \\
&= \varphi^{-1}(\max_{v \in U} T_L(\widetilde{R}_{\varphi}(v, u), \Bar{A}_{\varphi}(v))) \\
&= \max_{v \in U}\varphi^{-1}(T_L(\varphi(\widetilde{R}(v, u)), \varphi(\Bar{A}(v)))) \\
&= \max_{v \in U}  T_{L,\varphi}(\widetilde{R}(v, u), \Bar{A}(v)) = \overline{\text{apr}}_{\widetilde{R}}^{\max, T_{L, \varphi}}(A)(u).
\end{align*}
The derivation for $T_P$ is the same.

The proof for the lower approximation is analogous. 
\end{proof}

\section{Didactic Examples}
\label{sec:toy_examples}

In this section, we provide two didactic examples of the approaches (\ref{eq:T_L_linear_program}) and (\ref{eq:T_L_quad_program}) that use $T_{L, \varphi}$ $t$-norms. One example is related to a binary classification problem while the other one is related to a regression problem. 

To evaluate pairwise fuzzy relation values, in both cases we use a $T_L$-equivalence relation called triangular similarity. For a condition attribute $q$, it is defined as
\begin{align}
\label{eq:triangular_similarity}
   R_q (u,v) = \max \left ( 1 - \frac{|f(u,q) - f(v,q)|}{range(q)},0\right) 
\end{align}
while the overall relation is then $R(u,v) = \min_{q} R_q(u,v)$. More details on such similarity relation are provided in \cite{palangetic2021fuzzy}. 

For the classification purpose, we use 4 instances from two classes of the well-known iris dataset which can be found in the UCI dataset repository \cite{ucidatasets2019}. Those instances are shown in Table \ref{tab:toy_data_classification}.

\begin{table}[H]
\centering
\begin{tabular}{l|llll|l}
\toprule
instance &   att1  &    att2 &  att3   & att4 &  decision \\ \hline
1 & 5.4 & 3.4 & 1.7 & 0.2 & 0 \\
2 & 4.4 & 3.2 & 1.3 & 0.2 & 0 \\
3 & 5.9 & 3   & 4.2 & 1.5 & 1 \\
4 & 6.3 & 2.3 & 4.4 & 1.3 & 1 \\
\bottomrule
\end{tabular}
\caption{Classification data example}
\label{tab:toy_data_classification}
\end{table}

In this binary classification example, the observed values are from the set $\{0,1\}$ where they indicate if an instance is in a decision class or not. We consider such crisp set as a fuzzy one in order to obtain granular approximations. 
After applying the relation on every pair of instances, we get a relation matrix shown in Table \ref{tab:rel_values_classification}.

\begin{table}[H]
\centering
\begin{tabular}{l|llll}
 \begin{tabular}[c]{@{}l@{}}instance vs.\\ instance\end{tabular} & 1     & 2     & 3     & 4     \\ \hline
1 & 1     & 0.917 & 0.525 & 0.208 \\
2 & 0.917 & 1     & 0.492 & 0.292 \\
3 & 0.525 & 0.492 & 1     & 0.667 \\
4 & 0.208 & 0.292 & 0.667 & 1    
\end{tabular}
\caption{Matrix of relation values for the classification case}
\label{tab:rel_values_classification}
\end{table}
In Table \ref{tab:rel_values_classification}, the first row and the first column stand for indices of instances from Table \ref{tab:toy_data_classification}. The remaining entries are values of the relation on the corresponding pair of instances.    

After the matrix is calculated, we pass it together with the decision attribute to the optimization problem (\ref{eq:T_L_linear_program}) with probability parameters $p \in \{0, 0.25, 0.5, 0.75, 1\}$. The obtained granular approximations are given in Table \ref{tab:granular_approx_quantile}.

\begin{table}[H]
\centering
\begin{tabular}{l|llll}
\begin{tabular}[c]{@{}l@{}}$p$ vs.\\ instance\end{tabular}& 1     & 2     & 3     & 4     \\ \hline
0    & 0     & 0     & 0.475 & 0.708 \\
0.25 & 0     & 0     & 0.475 & 0.708 \\
0.5  & 0.326 & 0.292 & 0.8   & 1     \\
0.75 & 0.525 & 0.492 & 1     & 1     \\
1    & 0.525 & 0.492 & 1     & 1    
\end{tabular}
\caption{Granular approximations in the classifications case for the $p$-quantile loss}
\label{tab:granular_approx_quantile}
\end{table}

In every row, we have a granular approximation for a corresponding probability parameter from the first column. Every entry is a fuzzy membership degree for the corresponding instance which may be interpreted as the degree up to which the instance belongs to class with label 1. Since that fuzzy value is unknown, we have its distribution characterized with quantiles. For example, in the second row of Table \ref{tab:granular_approx_quantile}, we say that with probability 0.25, the degree up to which instance 3 belongs to the class with label 1 is not greater than 0.475.

The results for the mean squared error used in (\ref{eq:T_L_quad_program}) are shown in Table \ref{tab:granular_approx_squared}.
\begin{table}[H]
\centering
\begin{tabular}{l|llll}
instance & 1     & 2     & 3     & 4     \\ \hline
degree & 0.221 & 0.187 & 0.696 & 0.896
\end{tabular}
\caption{Granular approximations in the classifications case for the mean squared error}
\label{tab:granular_approx_squared}
\end{table}
In this case, we may say that the expected degree to which instance 3 belongs to the class with label 1 is 0.696.

In this particular case, the granularity (consistency) condition expressed in form $T(\widetilde{R}(u, v), \hat{A}(v)) \leq \hat{A}(u) $ can be interpreted as ``if instance $v$ belongs to the decision class, and $u$ is similar to $v$, then $u$ also belongs to the same decision class". We express it numerically for instances 1 and 3, for which the similarity can be obtained from Table \ref{tab:rel_values_classification} as 0.525. We notice that the granularity condition was not satisfied with their original labels from Table \ref{tab:toy_data_classification} since it holds that $T(0.525, 1) = 0.525 > 0$. Using the new labels from Table \ref{tab:granular_approx_squared}, we have that the condition is satisfied since $T(0.525, 0.696) = 0.221 \leq 0.221$ and $T(0.525, 0.221) = 0 \leq 0.696$.


In the regression example, we use 'Real estate valuation' dataset that can also be found in the UCI dataset repository \cite{ucidatasets2019}. The goal of this dataset is to predict a price of a real estate given its features like position, age, closeness to some important facilities, etc. We use 5 instances displayed in Table \ref{tab:toy_data_regression}. Since the decision values are real estate prices, we have to model them as a fuzzy set in order to use our methods. Fuzzy predicate ``expensive" can be modeled using prices, hence we construct a new fuzzy set  which indicates how expensive is a certain real estate based on the prices. The new fuzzy set is constructed using the following formula:
$$
fuzzy\_decision(x) = \max(\min(\frac{decision(x)- Q(decision, .005)}{Q(decision, .995) - Q(decision, .005)}, 1), 0),
$$
where $Q$ stands for $p$-quantile. This is similar to the linear scaling to the $[0,1]$ range, just here using quantiles we achieve that 0.5 percent of smallest prices has expensiveness 0 and 0.5 percent of largest prices has expensiveness 1. We use this approach in order to handle extreme values - outliers. After applying this transformation on the complete dataset, the fuzzy values of our chosen 5 instances are given in the last column of Table \ref{tab:toy_data_regression}.
\begin{table}[H]
\begin{tabular}{l|llllll|l|l}
\toprule
instance  & att1    & att2 & att3     & att4 & att5    & att6     & decision & \begin{tabular}[c]{@{}l@{}}fuzzy\\ decision\end{tabular} \\ \hline
1 & 2013.42 & 8.4  & 1962.628 & 1    & 24.955 & 121.555 & 23.5     & 0.18                                                     \\
2 & 2013.58 & 13.3 & 561.9845 & 5    & 24.987 & 121.544 & 47.3     & 0.54                                                     \\
3 & 2013.42 & 17.9 & 1783.18  & 3    & 24.967 & 121.515 & 22.1     & 0.158                                                    \\
4 & 2013.42 & 0    & 292.9978 & 6    & 24.977 & 121.545 & 73.6     & 0.938                                                    \\
5 & 2013.08 & 17.5 & 395.6747 & 5    & 24.957 & 121.534  & 24.5     & 0.195 
\\
\bottomrule                                                
\end{tabular}
\caption{Regression data example}
\label{tab:toy_data_regression}
\end{table}

Applying the relation on every pair of instances leads to the relation matrix shown in Table \ref{tab:rel_values_regression}. 

\begin{table}[H]
\centering
\begin{tabular}{l|lllll}
 \begin{tabular}[c]{@{}l@{}}instance vs.\\ instance\end{tabular} & 1     & 2     & 3     & 4     & 5     \\ \hline
1 & 1     & 0.6   & 0.569 & 0.5   & 0.6   \\
2 & 0.6   & 1     & 0.687 & 0.696 & 0.454 \\
3 & 0.569 & 0.687 & 1     & 0.591 & 0.635 \\
4 & 0.5   & 0.696 & 0.591 & 1     & 0.6   \\
5 & 0.6   & 0.454 & 0.635 & 0.6   & 1   
\end{tabular}
\caption{Matrix of relation values for the regression case}
\label{tab:rel_values_regression}
\end{table}

As before, we pass matrix values with the new expensiveness values to optimization problem (\ref{eq:T_L_linear_program}) with probability parameters $p \in \{0, 0.25, 0.5, 0.75, 1\}$. The obtained granular approximations are given in Table \ref{tab:granular_approx_quantile_regression}.

\begin{table}[H]
\centering
\begin{tabular}{l|lllll}
  \begin{tabular}[c]{@{}l@{}}$p$ vs.\\ instance\end{tabular}   & 1     & 2     & 3     & 4     & 5     \\ \hline
0    & 0.18  & 0.472 & 0.158 & 0.567 & 0.195 \\
0.25 & 0.18  & 0.472 & 0.158 & 0.567 & 0.195 \\
0.5  & 0.18  & 0.54  & 0.226 & 0.594 & 0.195 \\
0.75 & 0.343 & 0.54  & 0.435 & 0.843 & 0.444 \\
1    & 0.438 & 0.634 & 0.529 & 0.938 & 0.538
\end{tabular}
\caption{Granular approximations in the regression case for the $p$-quantile loss}
\label{tab:granular_approx_quantile_regression}
\end{table}

The obtained fuzzy values are estimations of quantiles of the expensiveness, under the assumption that it is a random fuzzy set and that its realizations are given in Table \ref{tab:toy_data_regression}. We interpret the values in a way that, for example, in the third row of Table \ref{tab:granular_approx_quantile_regression} we say that the expensiveness of instance 2 is less than 0.54 with probability 0.5, or in the fourth row of the table, we say that the expensiveness of instance 4 is less than 0.843 with probability 0.75. 

The results for the mean squared error used in (\ref{eq:T_L_quad_program}) are shown in Table \ref{tab:granular_approx_squared_regression}.
\begin{table}[H]
\centering
\begin{tabular}{l|lllll}
instance & 1     & 2    & 3     & 4     & 5     \\ \hline
degree & 0.195 & 0.54 & 0.286 & 0.695 & 0.295
\end{tabular}
\caption{Granular approximations in the classifications case for the mean squared error}
\label{tab:granular_approx_squared_regression}
\end{table}
In this case, we may say that the expected expensiveness of instance 4 is equal to 0.695.

In the regression case, the granularity (consistency) condition expressed in form $T(\widetilde{R}(u, v), \hat{A}(v)) \leq \hat{A}(u) $ is interpreted as ``if instance $v$ is expensive and $u$ is similar to $v$, then $u$ is also expensive". We observe instances 2 and 4 for which the similarity is equal to 0.696. Their original labels from Table \ref{tab:toy_data_regression} do not satisfy the granularity condition since $T(0.696, 0.938) = 0.634 > 0.54$. Using the new labels from Table \ref{tab:granular_approx_squared_regression}, we have that $T(0.696, 0.695) = 0.391 \leq 0.54$ and $T(0.696, 0.54) = 0.236 \leq 0.695$. Hence, the condition is satisfied.


\section{Conclusion and future work}
In this paper, we introduced a novel machine learning approach for handling inconsistencies in prediction problems with respect to a fuzzy relation. Our work was motivated by the method introduced by Kotłowski and Słowiński \cite{kotlowski2008statistical} for handling monotone inconsistency and we showed that the novel approach is a generalization of the same method in the binary classification case. Using fuzzy relations, the novel method is able to handle gradual relationships among instances while the KS approach can distinguish only two cases: either instances relate or not.

The novel approach produces a granular approximation of a fuzzy set. The approximation is granularly representable (without inconsistencies) and as close as possible to the original fuzzy set (w.r.t. a given loss function). It can be seen as a fuzzy counterpart of the monotone approximation produced by the KS approach. As in the work of Kotłowski and Słowiński, we provided statistical foundations of the granular approximations. In the next step, we formulated optimization problems in order to calculate the approximations and we showed some of their important properties.
At the end, we provided two didactic examples; one for a binary classification problem and one for a regression problem. In the didactic examples, we showed how fuzzy relations are used to model relationship among numerical data, how the granular approximations are calculated and how to interpret them in the two cases for different loss functions. 

Since this work is theoretical in its nature, for the future work we propose its implementation and verification on practical applications. As it was already mentioned in the Introduction, the possible applications are in fuzzy rough set based methods and fuzzy rule induction. 
Therefore, our future work will mainly concentrate on the aforementioned implementation and computational experiments. 

\label{sec:conclusion}

\section*{Acknowledgements}

Marko Palangetić and Chris Cornelis would like to thank Odysseus project from Flanders Research Foundation (FWO) for funding their research.
Salvatore Greco wishes to acknowledge the support of the Ministero dell’Istruzione, dell’Universitá e della Ricerca (MIUR) - PRIN 1576 2017, project “Multiple Criteria Decision Analysis and Multiple Criteria Decision Theory”, grant 2017CY2NCA. 
Roman Słowiński is acknowledging the support of grant 0311/SBAD/0700.



\bibliographystyle{elsarticle-num} 
\bibliography{bibliography.bib}

\appendix

\section{Minimum-cost flow problem}
\label{sec_app:min_cost_flow}
This section is based on the monograph \cite{ahuja1988network}, especially on its 9th chapter.

A flow network is defined as a directed graph where a real value called imbalance is assigned to each node. Imbalances split nodes into two subsets: supply nodes with a positive imbalance (supply value) and demand nodes with a negative imbalance (demand value). Moreover, each edge is characterized by a positive real capacity, and a cost value. We also assign flow amounts to each edge which satisfy the condition that they are at most as large as capacities. More formally, let $G$ be a finite set of nodes, $E \subseteq G \times G$ the finite set of edges, while $F = (G,E)$ is the flow network. We denote imbalances with $b_i$ for $i \in G$, capacities with $l_{i,j}$, costs with $c_{i, j}$ and flow with $z_{i,j}$ for $(i, j) \in E$.

The minimum-cost flow problem is an optimization problem defined on a flow network where we want to transport flow from the supply nodes to the demand
nodes, such that
\begin{itemize}
    \item[--] the difference between the flow that leaves a node and the flow that enters the node is equal to the imbalance of this node,
    \item[--] a flow in a particular edge is at most as large as the capacity of that edge, and
    \item[--] the total cost of the flow transportation is minimal.
\end{itemize}
Formally, we have the following problem:

\begin{subequations}
\label{eq:min_cost_flow}
\begin{align}
&\text{minimize}   &&\displaystyle\sum_{(i,j) \in  E} c_{i,j} z_{i, j}, &&\label{eq:mcf_objective} \\
&\text{subject to}    &&\sum_{j:(i,j) \in E} z_{i,j} - \sum_{j:(j,i) \in E} z_{j,i} = b_i,  && i \in G \label{eq:mcf_balance_constr}\\    
    &    &&0 \leq z_{i,j} \leq l_{i, j}. &&  (i,j) \in E& \label{eq:mcf_capacity_constr}
\end{align}
\end{subequations}

We distinguish two sets of constraints in the previous optimization problem: balance constraints (\ref{eq:mcf_balance_constr}) and capacity constraints (\ref{eq:mcf_capacity_constr}). If we sum the balance constraints, we get $\sum_{i \in G} b_i = 0$ which states that the amount of supply is equal to the amount of demand, which is a necessary assumption to have a feasible solution. 

We say that a flow is feasible if it is a feasible solution of (\ref{eq:min_cost_flow}), while we say that we have a pseudo-flow if only the capacity constraints are satisfied.

For a given pseudo-flow $z'$, a residual network $F' = (G, E')$ can be defined. We have new imbalances:
$$
b'_{i} = b_i -\bigg(\sum_{j:(i,j) \in E} z_{i,j} - \sum_{j:(j,i) \in E} z_{j,i}\bigg ),
$$
while for each edge $(i,j) \in E$ for which $z'_{i,j} > 0$, we add the reverse edge $(j,i)$ to the network with cost $c'_{j,i} = -c_{i,j}$, while keeping the original edge. The capacity of the original edge $(i,j)$ in $F'$ is $l'_{i,j} = l_{i,j} - z_{i,j}$, while the capacity of the added reverse edge $(j,i)$ is $l'_{j,i} = z_{i,j}$. We may notice that when adding a new edge $(j,i)$ to $E'$, there can already exist an edge $(j,i)$ from $E$. However, in our case of use, we will not face such an issue, i.e., we will have either $(i,j)$ or $(j,i)$ in $E$ and not both at the same time. The residual network keeps the complete information about flow $z'$ which can be reconstructed from $F'$.

The concept of residual network is important for the development of algorithms for solving (\ref{eq:min_cost_flow}). In this moment, we will not discuss the existence of a feasible solution in general since later we will show that it always exists in our case of use.

A cost of a particular path or cycle in the flow network is calculated as the sum of the costs of edges in that path or cycle. For an optimal flow $z^*$, we have the following result.
\begin{proposition}
\label{prop:min_cost_neg_cycle}
A flow $z^*$ is optimal if and only if there are no cycles of negative cost in the residual network $F(z^*)$.
\end{proposition}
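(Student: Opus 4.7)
The plan is to prove both implications separately, using the standard flow-augmentation technique in one direction and a flow-decomposition argument in the other.

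For the necessity direction ($\Rightarrow$), I would argue by contraposition. Suppose the residual network $F(z^*)$ contains a directed cycle $C$ whose total cost is strictly negative. Let $\delta = \min \{ l'_{i,j} : (i,j) \in C \} > 0$ be the minimum residual capacity along $C$. Define a new pseudo-flow $\tilde{z}$ by pushing $\delta$ units along $C$: for every edge $(i,j) \in C$ that is an original edge of $E$, set $\tilde{z}_{i,j} = z^*_{i,j} + \delta$, and for every edge $(i,j) \in C$ that is a reverse edge (corresponding to an original $(j,i) \in E$), set $\tilde{z}_{j,i} = z^*_{j,i} - \delta$. By the choice of $\delta$, the resulting flow still satisfies the capacity constraints $0 \leq \tilde{z}_{i,j} \leq l_{i,j}$, and since the augmentation happens along a cycle, the net change at each node is zero, so the balance constraints continue to hold. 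The change in objective value equals $\delta$ times the cost of $C$, which is strictly negative, contradicting optimality of $z^*$.

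For the sufficiency direction ($\Leftarrow$), I would argue by contradiction: assume $F(z^*)$ has no negative-cost cycle, but there exists a feasible flow $z'$ with $\sum c_{i,j} z'_{i,j} < \sum c_{i,j} z^*_{i,j}$. The key tool is the \emph{flow decomposition lemma}: the difference $z' - z^*$, viewed as a pseudo-flow in the residual network $F(z^*)$, satisfies zero net imbalance at every node (since both $z'$ and $z^*$ satisfy the same balance constraints in (\ref{eq:mcf_balance_constr})), and hence can be decomposed as a nonnegative combination of flows along simple directed cycles in $F(z^*)$. Concretely, one writes $z' - z^* = \sum_k \lambda_k \chi_{C_k}$ where $\lambda_k > 0$ and $\chi_{C_k}$ is the indicator of a directed cycle $C_k$ in $F(z^*)$. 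The cost difference is therefore
\begin{equation*}
\sum_{(i,j) \in E} c_{i,j}(z'_{i,j} - z^*_{i,j}) \;=\; \sum_k \lambda_k \,\text{cost}(C_k).
\end{equation*}
Since the left-hand side is strictly negative by assumption and all $\lambda_k > 0$, at least one cycle $C_k$ must have strictly negative cost, contradicting the hypothesis.

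The main obstacle is the flow decomposition lemma used in the second direction: one must justify that any circulation (a pseudo-flow with zero imbalance everywhere) in the residual network decomposes into cycle flows. I would either invoke this as a standard result from \cite{ahuja1988network} or sketch it inductively: if the circulation is nonzero on some edge, trace forward through edges with positive flow (always possible by conservation at intermediate nodes) until a node is revisited, extract the resulting cycle with weight equal to the minimum flow along it, subtract this cycle flow, and iterate until the circulation vanishes. The process terminates because each extraction zeroes out at least one edge, and the resulting decomposition preserves the direction of flow in the residual network, ensuring each $C_k$ is a valid residual cycle.
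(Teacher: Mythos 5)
Your proof is correct and is the standard argument: negative-cycle cancellation (augment by $\delta>0$ along the cycle) for necessity, and a cycle decomposition of the circulation $z'-z^*$ in the residual network for sufficiency. The paper itself states this proposition without proof, citing it from the monograph \cite{ahuja1988network}, and the very same decomposition strategy is what the paper later adapts to prove its generalized analogue (Proposition \ref{prop:neg_cost_cycle_optimal}), so your argument matches the intended one.
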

Bearing in mind Proposition \ref{prop:min_cost_neg_cycle}, a simple algorithm can be constructed to solve (\ref{eq:min_cost_flow}). Namely, we construct an initial feasible flow in our network, then search for the negative cycles and eliminate them.

However, a more useful algorithm for us is the Successive Shortest Path (SSP) algorithm for solving the minimum-cost flow problem. The algorithm is provided as Algorithm \ref{algo:ssp}.

\begin{algorithm}[H]
    \caption{Successive Shortest Paths}
\begin{algorithmic}[1]
\State \textbf{Input:} Flow network $F = (G,E)$.
\State \textbf{Output:} Flow $z$.
\State{Set initial flow $z_{i,j} = 0$, $(i,j) \in E$}
\State{Set initial residual network $F' = F$}
\While {there exist supply/demand values different from 0}
    \State{Pick supply node $i$ and demand node $j$}
    \State{Calculate the shortest path $P$ from $i$ to $j$ using cost values from $F'$}
    \State{Send the largest possible amount of flow through $P$}
    \State{Update $F'$}
\EndWhile
\State{Reconstruct $z$ from $F'$}
\end{algorithmic}
\label{algo:ssp}
\end{algorithm}

The shortest path $P$ can be calculated using the Bellman-Ford algorithm since $F'$ may contain negative values. The largest possible amount of flow through $P$ is calculated as $\delta = \min\{b'_i, |b'_j|, c'_{i_1,j_1} \text{ for } (i_1,j_1) \in P \}$. The residual network is then updated such that 
\begin{itemize}
    \item $b'_i = b'_i - \delta$, $ b'_j = b'_j + \delta $
    \item $c'_{i,j} = c'_{i,j} - \delta$,  $c'_{j,i} = c'_{j,i} + \delta$ for $(i,j) \in P$
\end{itemize}
The idea of the proof of correctness is that sending a flow through the shortest path does not produce negative cycles in the residual network. Hence, when all supply is sent to the demand nodes and the feasible solution is achieved, it will be an optimal one.

We also introduce generalized network flows based on Chapter 15 of \cite{ahuja1988network}. In some cases, the flow in a particular edge may be increased or decreased by a multiplier after it leaves the left node of the edge. Denote the multipliers with $m_{i,j}$ for $(i,j) \in E$. The generalized  minimum-cost flow problem is then formulated as
\begin{equation}
\label{eq:gen_min_cost_flow}
\begin{aligned}
&\text{minimize}  && \displaystyle\sum_{(i,j) \in  E} c_{i,j} z_{i, j}, &&\\
&\text{subject to}    &&\displaystyle\sum_{j:(i,j) \in E} m_{i,j}z_{i,j} - \displaystyle\sum_{j:(j,i) \in E} z_{j,i} = b_i,  && i \in G \\
        &    &&0 \leq z_{i,j} \leq l_{i, j}, &&  (i,j) \in E.
\end{aligned}
\end{equation}
If the multiplier is greater than 1, then the flow is increased while if it is smaller than 1, then the flow is decreased.

Different theoretical results hold for the generalized minimum-cost flow problem (\ref{eq:gen_min_cost_flow}). Fortunately, our particular case of (\ref{eq:gen_min_cost_flow}) allows obtaining similar properties as we have in the ordinary minimum-cost flow problem (\ref{eq:min_cost_flow}).

\section{Duality and the combinatorial approach}
\label{sec_app:duality}

In this section, the dual optimization problems of (\ref{eq:T_L_linear_program}) and (\ref{eq:T_P_linear_program}) are considered. In our particular case, the dual problems are interesting since they can be modeled using graph theory and can be solved using combinatorial optimization methods. These combinatorial algorithms may not be more efficient than the simplex method used for solving linear programs, but their development is important since they allow us to prove some interesting properties of the estimated fuzzy set. 
We examine optimization problem (\ref{eq:T_L_linear_program}). First, we eliminate variables $x_u, u \in U$, using constraints $x_u = y_u + \Bar{A}_{\varphi}(u) - \alpha_u$ and we denote $M(u,v) = 1 - \widetilde{R}_{\varphi}(u,v)$. Then, the problem is reformulated as

\begin{equation}
\label{eq:T_L_linear_program2}
\begin{aligned}
&\text{maximize}  && p\displaystyle\sum_{u \in U}\alpha_u - \displaystyle\sum_{u \in U}y_u, &&\\
&\text{subject to}    &&\alpha_v - \alpha_u \leq M(u,v),  && u, v \in U\\
                &    &&\alpha_u - y_u \leq \Bar{A}_{\varphi}(u),  && u\in U\\
  &              && y_u \geq 0 &&u \in U.
\end{aligned}
\end{equation}
Its dual problem is then

\begin{equation}
\label{eq:T_L_linear_dual}
\begin{aligned}
&\text{minimize}  && \displaystyle\sum_{u,v \in U}  M(u,v)z_{u,v} + \displaystyle\sum_{u \in U} \Bar{A}_{\varphi}(u) z_{0, u}  &&\\
&\text{subject to}    &&-z_{0, u} + \displaystyle\sum_{v \in U} z_{u,v} - \displaystyle\sum_{v \in U} z_{v,u} = -p,   &&u \in U \\
        &    &&z_{0, u} \leq 1.   &&u \in U.
\end{aligned}
\end{equation}

In (\ref{eq:T_L_linear_dual}), variables $z_{u,v},\, u,v \in U$, correspond to the first set of constraints from primal (\ref{eq:T_L_linear_program2}), while variables $z_{0, u}, u\in U$, correspond to the second set of constraints from the primal. The first set of constraints in (\ref{eq:T_L_linear_dual}) corresponds to variables $\alpha_u, u\in U$, from the primal, while the second set of constraints corresponds to variables $y_u, u \in U $, from the primal.

If we sum up the equality constraints, we get $\sum_{u \in U} z_{0, u} = np$ where $n = |U|$. Bearing that in mind, we see that (\ref{eq:T_L_linear_dual}) is exactly the minimum-cost flow problem on $n + 1$ nodes where we have one supply node with imbalance $b_0 = np$ and $n$ demand nodes with imbalances $-p$. From the supply node, to all other nodes we have flow $z_{0, u}$, costs $\Bar{A}_{\varphi}(u)$, while all capacities are equal to $1$. Among the demand nodes, there is a flow $z_{u,v}, u,v \in U$, costs $M(u,v)$, and there are no capacity constraints.

To make our model even simpler, we utilize the $T$-transitivity of the relation $\widetilde{R}$. It is easy to verify that the $T$-transitivity is equivalent to $M(u,v) + M(v,w) \geq M(u,w)$ for $u,v,w \in U$. Using this fact, we have that there is an optimal flow which does not use two consecutive edges that are between demand nodes. Assume that for an optimal flow $z^*$ we have $z^*_{u,v} > 0$ and $z^*_{v, w} > 0$, and let $\delta = \min(z^*_{u,v}, z^*_{v, w})$. Then the flow $z^*_{u,v} - \delta, z^*_{v,w}-\delta, z^*_{u,w} + \delta$ is feasible and at most as expensive as the previous flow, i.e., it is optimal. The new flow does not use two consecutive edges since either $z^*_{u,v} - \delta$ or $z^*_{v,w} - \delta$ is 0. The previous elaboration further implies that an optimal flow from the supply node can travel through at most one intermediary node to the destination demand node. Hence, our initial network flow on $n+1$ nodes can be transformed into a flow network on $2n+1$ nodes which has a form of a bipartite graph plus the supply node. One independent set in the bipartite graph is formed by the intermediate nodes, while the other independent set is formed by the destination nodes.  

\begin{figure}[!htb]
\centering
    \begin{tikzpicture}[thick, main/.style = {draw, circle}]
        \tikzstyle{circ}  = [circle, minimum width=15pt, draw, inner sep=0pt]
        
        \node[circ] (1) at (-2,2) {$0$};
        \node[circ] (2) at (0,3.5) {$e_{u_1}$};
        \node[circ] (3) at (0,2) {$e_{u_2}$};
        \node[circ] (4) at (0,0.5) {$e_{u_3}$};
        \node[circ] (5) at (2,3.5) {$f_{u_1}$};
        \node[circ] (6) at (2,2) {$f_{u_2}$};
        \node[circ] (7) at (2,0.5) {$f_{u_3}$};

        \draw[->] (1) -- (2);
        \draw[->] (1) -- (3);
        \draw[->] (1) -- (4);
        \draw[->] (2) -- (5);
        \draw[->] (2) -- (6);
        \draw[->] (2) -- (7);
        \draw[->] (3) -- (5);
        \draw[->] (3) -- (6);
        \draw[->] (3) -- (7);
        \draw[->] (4) -- (5);
        \draw[->] (4) -- (6);
        \draw[->] (4) -- (7);
    \end{tikzpicture}
\caption{Flow modeled as a bipartite graph}
\label{fig:bipartite_network_flow}
\end{figure}
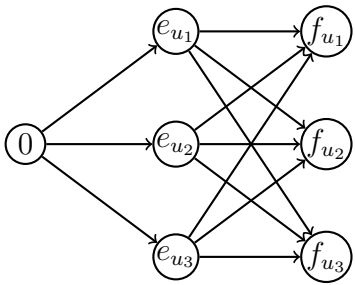

In Figure \ref{fig:bipartite_network_flow}, we have an example of a bipartite network on set of instances $U = \{u_1,u_2,u_3\}$. Since $n=3$ in this case, the bipartite graph has $2\cdot 3 + 1 = 7$ nodes. Node $0$ is the supply node with imbalance $np$. Nodes $\{e_{u_1}, e_{u_2}, e_{u_3}\}$ are the intermediate nodes without imbalances while $\{f_{u_1}, f_{u_2}, f_{u_3}\}$ are the destination nodes with demands $-p$. For $u \in U$, the cost of edges $(0,e_u)$ is $\Bar{A}_{\varphi}(u)$ while the capacity is 1. For $u,v \in U$, the cost of edges $(e_u,f_v)$ is $M(v,u)$ while the capacity is unbounded. The cost of edges $(e_u, f_u)$ is then $0$. If a flow takes path $(0,e_u, f_v)$ in the bipartite graph for $u,v \in U$, and $u \neq v$, then in the original network it means that the flow travels from $0$ to $v$ using intermediate node $u$. If $u=v$, it means that there were no intermediate nodes and that the flow travels directly from $0$ to $u$. 

For a given flow in a bipartite network flow, there is also the corresponding residual network. In such residual network, there are edges from the destination nodes to the intermediate nodes and from the intermediate nodes to the supply. The costs and the capacities of the new edges are then calculated as it was explained in \ref{sec_app:min_cost_flow}.

The bipartite network representation is useful from the perspective of the flow decomposition. For a feasible flow, it is easy to represent it as a sum of simple flows that go from the supply node to the destination node. In the original network, one node can be a destination node for some flow but also an intermediate node for a different flow. Hence, the decomposition is harder in the original network. The decomposition will be important later when dealing with the dual of (\ref{eq:T_P_linear_program}).

The next question is how to reconstruct optimal solution of the primal problem, i.e., to calculate $\alpha^*$ from a solution of the dual $z^*$. Following the duality theory provided in \cite{ahuja1988network}, an optimal vector $\alpha^*$ can be obtained as lengths of shortest paths from the supply node to the corresponding destination nodes in the residual network of $z^*$. 

Now, we examine the dual of (\ref{eq:T_P_linear_program}). The linear program here can be rewritten similarly as (\ref{eq:T_L_linear_program2}), just with the different granularity constraints. Here instead of $\alpha_v - \alpha_u \leq M(u,v)$ we have $\alpha_v\widetilde{R}_{\varphi}(u,v) \leq \alpha_u$. The dual of such formulated problem is then

\begin{equation}
\label{eq:T_P_linear_dual}
\begin{aligned}
&\text{minimize}   &&   \sum_{u \in U} \Bar{A}_{\varphi}(u) z_{0, u},  &&\\
&\text{subject to}    &&-z_{0, u} + \widetilde{R}_{\varphi}(u,v)\sum_{v \in U} z_{u,v} - \sum_{v \in U} z_{v,u} = -p,  && u \in U \\
        &    &&z_{0, u} \leq 1.   &&u \in U
\end{aligned}
\end{equation}

The difference between (\ref{eq:T_L_linear_dual}) and (\ref{eq:T_P_linear_dual}) is that in the latter, we have multipliers $\widetilde{R}_{\varphi}(u,v), u,v \in U$, instead of costs on the edges. Due to the multipliers, we now deal with the minimum-cost flow problem on a generalized flow network with $n + 1$ nodes among which there are $n$ demand nodes with demand $-p$ and one supply node with an unspecified amount of supply.

We may notice that in this case the edges of the network consist of two different groups. The first group is formed by the edges from the supply nodes to the demand nodes. Those edges have costs and do not have multipliers. The second group is formed by the edges among the demand nodes. Those edges, conversely, have multipliers and do not have costs. Similarly to (\ref{eq:T_L_linear_dual}), we are able to utilize the $T$-transitivity of $\widetilde{R}_{\varphi}$ w.r.t. $T_P$ in a way that there is an optimal flow which does not use two consecutive edges from the second group. If we have three demand nodes $u,v,w \in U$ in a network and an optimal flow that uses edges $(u,v)$ and $(v,w)$, we can redirect the flow to use only edge $(u,w)$ and the redirected flow will have smaller or equal loss than the original flow. That will further lead to the smaller or equal cost of the redirected flow which makes it optimal. Therefore, as above, there is an optimal solution in which a flow travels from the supply node to the destination demand node using at most one intermediate node. That again further implies that the initial general network on $n+1$ nodes can be transformed into a generalized bipartite flow network on $2n+1$ nodes. For the new network, the same model applies as in Figure \ref{fig:bipartite_network_flow}. Using this model, we can clearly see the difference between two groups of edges introduced above. The first group is formed by the edges between the supply node and the left partition of the bipartite graph (intermediate nodes), while the second group is formed by the edges between the two partitions of the bipartite graph.

As before, for a given flow on the generalized bipartite network, we have the corresponding residual network. The same properties apply as above except the case when the flow passes through an edge with multiplier. In that case, if the original edge has multiplier $\widetilde{R}_{\varphi}(u,v)$ then the reverse edge in the residual network will have multiplier $\frac{1}{\widetilde{R}_{\varphi}(u,v)}$ which is an edge of a gain type (greater than 1).

We will now construct a new algorithm for solving a generalized minimum-cost flow problem on a generalized bipartite flow network. The algorithm is based on the existing SSP algorithm presented in Algorithm \ref{algo:ssp}. Assume that we have a demand node $f_u$ to which we want to deliver some flow $b$. We want to deliver the flow at the cheapest possible price. If we deliver a flow using intermediate node $e_v$, then the amount of flow that we have to take from the supply node is $\frac{b}{\widetilde{R}_{\varphi}(v,u)}$ and the cost of such flow is $\frac{b\Bar{A}_{\varphi}(v)}{\widetilde{R}_{\varphi}(v,u)}$. In general, a price to deliver a unit of flow is a ratio of the cost of an edge from the supply to the first partition and the product of multipliers of edges that connect the two partitions. Bear in mind that in the residual network, a flow may use multiple edges between partitions (edges with multipliers) to deliver the flow. Using this, we construct the greedy approach presented as Algorithm \ref{algo:gssp}.

\begin{algorithm}[H]
    \caption{Generalized successive shortest paths}
\begin{algorithmic}[1]
\State \textbf{Input:} Bipartite flow network $F$.
\State \textbf{Output:} Flow $z$.
\State{Set initial flow $z_{i,j} = 0$, $(i,j) \in E$}
\State{Set initial residual network $F' = F$}
\While {there exists demand value different than 0}
    \State{Pick a demand node $i$}
    \State{Calculate the smallest possible cost from the supply node to $i$}
    \State{Calculate the largest amount of flow that can be sent through the least costly path}
    \State{Send the calculated flow through the least costly path}
    \State{Update $F'$}
\EndWhile
\State{Reconstruct $z'$ from $F'$}
\end{algorithmic}
\label{algo:gssp}
\end{algorithm}
To calculate the smallest possible cost from the supply node, we can use a shortest path method. We want to minimize the ratio of one cost value (from the supply to the intermediate nodes) and a product of multipliers (between intermediate and destination nodes). If we apply logarithms on the cost values and reciprocals of the multipliers, we may apply the Bellman-Ford algorithm to calculate the shortest path between the supply node and the chosen demand node in order to obtain a least costly way to transport the flow.

After the shortest path is determined, we have to calculate the amount of flow that will be taken from the supply node in order to deliver the maximal amount of flow to the demand node. In comparison with the standard minimum-cost flow problem, here we have to take into account all the loses and gains that happen during the flow transfer. Denote the shortest path in the residual network with $P = (0, e_{u_1}, f_{u_2}, e_{u_3}, \dots, f_{u_k})$ and let $b$ be a demand of node $f_{u_k}$. We would like to deliver $|b|$ ($|\cdot|$ stands for absolute value) amount of flow to the demand node from the supply node, but that is not always possible due to the capacities of particular edges on path $P$. The maximal amount of flow can be determined recursively. The maximal amount of flow that can be transferred from node $f_{u_{k-2}}$ to node $f_{u_{k}}$ is bounded with the capacity of the reverse edge $l'_{f_{u_{k-2}},e_{u_{k-1}}}$ and the demand divided with the loses on the edges in between $\frac{|b|\widetilde{R}_{\varphi}(u_{k-1},u_{k-2})}{\widetilde{R}_{\varphi}(u_{k-1}, u_k)}$. Using that reasoning, if we set the initial value $z' = |b|$, then we can use the following iteration formula.
$$
z' = \min \left( \frac{z'\widetilde{R}_{\varphi}(u_{k-2i+1},u_{k-2i})}{\widetilde{R}_{\varphi}(u_{k-2i+1}, u_{k-2i+2})}, l'_{f_{u_{k-2i}},e_{u_{k-2i+1}}} \right),
$$
for $i$ going from $1$ to $\frac{k}{2} - 1$. The last step is $z' = \min (\frac{z'}{\widetilde{R}_{\varphi}(u_1, u_2)}, l'_{0, e_{u_1}})$ for subpath $(0, e_{u_1}, f_{u_2})$.

After $z'$ is calculated, we have to determine the amount of flow that will end up in the demand node $f_{u_k}$ as well as to update the residual network on path $P$. In the first step, $z'$ leaves the supply node, passes node $e_{u_1}$ and enters node $f_{u_2}$. On edge $(e_{u_1}, f_{u_2})$ it was multiplied with $\widetilde{R}_{\varphi}(u_1, u_2)$: $z' = \widetilde{R}_{\varphi}(u_1, u_2)z'$. Then we update the residual network on edges $(f_{u_2}, e_{u_1})$ and $(f_{u_2}, e_{u_3})$: $l'_{f_{u_2}, e_{u_1}} = l'_{f_{u_2}, e_{u_1}} + z'$, $l'_{f_{u_2}, e_{u_3}} = l'_{f_{u_2}, e_{u_3}} - z'$ and we send the flow to the next node from the second partition and repeat the process. After the remaining flow arrives to the demand node, we increase the imbalance of the demand node.

Since Algorithm \ref{algo:gssp} is novel, we do not benefit from the existing theory as we did in case of Algorithm \ref{algo:ssp}. In \ref{sec_app:proof_of_correctness}, we showed that Algorithm \ref{algo:gssp} indeed returns an optimal result, as well as how to construct a solution of the primal problem from the solution of the dual one. As is shown in \ref{sec_app:proof_of_correctness}, $\alpha^*$ is constructed by performing step 7 (without logarithms) of Algorithm \ref{algo:gssp} on the residual network of $z^*$, i.e., it is the smallest possible cost of the transport from the supply node to the destination nodes.


\section{Proof of correctness for Algorithm \ref{algo:gssp}}
\label{sec_app:proof_of_correctness}

In this section we prove that Algorithm \ref{algo:gssp} terminates and that it outputs an optimal solution. Also, we construct a way to obtain a solution of the primal problem from the solution of the dual one.

We first prove the termination.

\begin{proposition}
Assume that all parameters in Algorithm \ref{algo:gssp} are rational numbers. Then algorithm \ref{algo:gssp} terminates.
\end{proposition}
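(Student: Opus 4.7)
The plan is to prove termination by combining a discreteness argument for rational inputs with a monotone potential function tracking the total unsatisfied demand. The main obstacle will be ensuring that the amount of flow actually delivered to the chosen demand node in each iteration is bounded below by a strictly positive constant, despite the fact that multipliers along the chosen path may shrink the flow between the supply and the destination.

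First, I would show by induction on the iteration count that rationality of the input data (multipliers $\widetilde{R}_\varphi(u,v)$, edge costs, capacities, and demand values $p$) is preserved throughout the algorithm: every residual capacity, every imbalance $b_u'$, and every flow amount $z'$ computed in step 8 remains a rational number. This is immediate because the only operations performed on these quantities are additions, subtractions, products, and quotients. Next, I would bound the denominators that appear. Because the flow network is bipartite augmented with the supply node $0$, any simple path in the residual network from $0$ to a demand node has length at most $2n-1$ with $n = |U|$. Hence the recursive computation of $z'$ performs only a bounded number of multiplications and divisions by rational multipliers. Writing every input as a fraction over a common denominator $D$, one deduces that in each iteration the amount ultimately delivered to the target demand node lies in $\tfrac{1}{D^{K}}\mathbb{Z}$ for some $K = K(n, D)$ depending only on $n$ and $D$, and in particular is at least $\epsilon := D^{-K}$ whenever it is nonzero.

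Define the potential $\Phi = -\sum_{u \in U} b_u'$, i.e., the total unsatisfied demand in the current residual network. Initially $\Phi = np$, and the while loop exits precisely when $\Phi = 0$. In each iteration, a strictly positive quantity of flow is delivered to the chosen demand node, and by the previous step this delivered quantity is at least $\epsilon$. Thus $\Phi$ strictly decreases by at least $\epsilon$ per iteration, giving termination in at most $np / \epsilon$ iterations.

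The subtle step — the main obstacle — is to guarantee that, whenever $\Phi > 0$, step 7 actually returns a path of finite cost from the supply to some demand node (so that $z' > 0$ in step 8). I would address this by invoking primal feasibility of (\ref{eq:T_P_linear_program}), witnessed for instance by $\alpha_u = 0$ for all $u \in U$, together with LP duality and the structural observation from Section \ref{sec_app:duality} that it suffices to consider the bipartite reformulation on $2n+1$ nodes. Since the edges $(0, e_u)$ have strictly positive capacity $1$ in the original residual network, and since the bipartite structure together with $T$-transitivity ensures that every demand node can be reached through an intermediate node, a feasible augmenting residual path always exists while demand remains, and the Bellman--Ford computation in step 7 is well-defined. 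This closes the argument and establishes finite termination.
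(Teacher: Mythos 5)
Your proof takes essentially the same route as the paper's: both arguments rest on the observation that rational data quantizes the per-iteration progress, so the total unsatisfied demand (your potential $\Phi$, the paper's sum of remaining demands) drops by a fixed positive amount each round. The paper clears denominators by multiplying the demands in (\ref{eq:T_P_linear_dual}) by the LCM $C$ of all numerators and denominators and concludes termination in at most $Cpn$ steps; you keep everything rational and bound the delivered amount below by $\epsilon = D^{-K}$, which is the same idea in different clothing. The one point where your write-up needs more care is the claim that $K$ depends only on $n$ and $D$: the quantities entering the recursive computation of $z'$ at iteration $t$ are residual capacities and imbalances inherited from \emph{all} earlier iterations, so bounding the number of multiplications and divisions within a single iteration does not by itself yield an iteration-independent denominator bound --- one must argue that the accumulated denominators stay within a fixed lattice. (The paper's own proof makes an equivalent and equally terse leap when it asserts that, after scaling, ``all intermediate flows will become integers,'' so this is a shared gap rather than a defect specific to your argument.)
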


\begin{proof}
It is easy to see that if we multiply the right side of the constraints in (\ref{eq:T_P_linear_dual}) with a positive constant $C$, the the optimal solution is $Cz^*$ where $z^*$ is the solution of the initial problem. For some parameter $a$ in (\ref{eq:T_P_linear_dual}) we have its rational representation $a = \frac{q}{r}$ for $q$ and $r$ being integers. Let $C$ be the least common multiple (LCM) of all integers $q$ and $r$ for all parameters in (\ref{eq:T_P_linear_dual}). If we multiply the right side of the constraints in (\ref{eq:T_P_linear_dual}) with $C$, then all the demand value will become integers and all intermediate flows in Algorithm \ref{algo:gssp} will become integers. That further implies that all the updates on demands in Algorithm \ref{algo:gssp} will be integer which further implies that the algorithm will terminate in at most $Cpn$ steps.
\end{proof}

In practice, the termination is always guaranteed since computers can work only with rational numbers

Now, lets define a flow cycle in the residual generalized bipartite network. The cycle starts with an edge from the first part (costly edges without multipliers) of the network, then it contains edges from the second part (edges with multipliers without costs) and ends with a reverse edge from the first part. A model of such cycle is shown in Figure \ref{fig:cycle_generalized}.

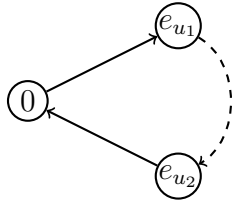
\begin{figure}[!htb]
\centering
    \begin{tikzpicture}[thick, main/.style = {draw, circle}]
        \tikzstyle{circ}  = [circle, minimum width=15pt, draw, inner sep=0pt]
        
        \node[circ] (1) at (-2,1) {$0$};
        \node[circ] (2) at (0,2) {$e_{u_1}$};
        \node[circ] (3) at (0,0) {$e_{u_2}$};

        \draw[->] (1) -- (2);
        \draw[->] (3) -- (1);
        \draw[->] (2) to [bend left=60] (3)[dashed];
    \end{tikzpicture}
\caption{Cycle in a generalized bipartite network}
\label{fig:cycle_generalized}
\end{figure}

In Figure \ref{fig:cycle_generalized}, the dashed line between $e_{u_1}$ and $e_{u_2}$ stands for the subpath that contains only the edges from the second part of the residual network. Also, it may hold that $e_{u_1} \equiv e_{u_2}$. In that case, the cycle consists only of the edges from the second part. Let $\widetilde{R}_{\varphi} (e_{u_1}, e_{u_2})$ be a multiplier of the path that consists of the edges from the second part of the residual network, i.e., a product of the multipliers on the edges from the path. We say that the cycle is of negative cost if $A_{\varphi}(u_1) < \widetilde{R}_{\varphi} (e_{u_1}, e_{u_2}) A_{\varphi}(u_2)$. For the reminder, $A_{\varphi}(u_1)$ and $A_{\varphi}(u_2)$ are the costs on edges $(0, e_{u_1})$ and $(0, e_{u_2})$. The reason why the cycle is of negative cost is that if we send a unit of flow along it, the cost of that flow is $A_{\varphi}(u_1) - \widetilde{R}_{\varphi} (e_{u_1}, e_{u_2}) A_{\varphi}(u_2)$, i.e., the cost is negative. Such flow would not change any demand value on the destination nodes but it will reduce the overall cost of the flow.

The next proposition utilizes the bipartite representation of the flow network.

\begin{proposition}
\label{prop:decomposition}
Every flow in a generalized bipartite network can be represented as a sum of a finite number of simple path flows from the supply node to a destination node.
\end{proposition}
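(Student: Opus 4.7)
The plan is a greedy flow-decomposition argument adapted to the generalized bipartite setting: iteratively peel off one simple supply-to-destination path flow at a time until the remaining flow is zero. This mirrors the classical acyclic flow decomposition theorem; the only new ingredient is careful bookkeeping for the edge multipliers.

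First I would formalize what a simple path flow on $P = (0, e_u, f_v)$ is: a flow whose support is contained in the two edges of $P$ and which satisfies the balance equation at the intermediate node $e_u$. Using the paper's convention that the outflow from $i$ on edge $(i,j)$ equals $m_{i,j}\, z_{i,j}$, the balance at $e_u$ (which has zero imbalance) forces $z^{P}_{0, e_u} = \widetilde{R}_{\varphi}(u, v)\, z^{P}_{e_u, f_v}$. Hence a simple path flow on $P$ is parameterised by a single nonnegative scalar, and fixing $z^{P}_{0, e_u} = \delta$ determines $z^{P}_{e_u, f_v} = \delta / \widetilde{R}_{\varphi}(u, v)$ with every other coordinate zero.

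Then I would induct on the number of edges carrying positive flow. If $z \equiv 0$, the empty sum is the decomposition. Otherwise, since the destinations collectively receive demand $np > 0$, there is some edge $(e_u, f_v)$ with $z_{e_u, f_v} > 0$; tracing back via the balance equation at $e_u$ gives $z_{0, e_u} > 0$. Choose $\delta$ maximal such that both $z_{0, e_u}$ and $z_{e_u, f_v}$ remain nonnegative after subtracting the corresponding path flow on $P$, namely $\delta = \min\bigl(z_{0, e_u},\, \widetilde{R}_{\varphi}(u,v)\, z_{e_u, f_v}\bigr)$. Since balance equations are linear, the residual $z' = z - z^{P}$ satisfies conservation at every node; the choice of $\delta$ guarantees nonnegativity and drives the flow on at least one of the two edges of $P$ to zero. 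The support of $z'$ is therefore strictly smaller than that of $z$. Because the bipartite network has at most $n + n^{2}$ edges, the procedure terminates after finitely many iterations, and the extracted simple path flows sum to $z$ by telescoping.

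The main obstacle is multiplier bookkeeping: making sure the extracted object truly is a feasible generalized flow (so that the coupled coordinates along $P$ are linked by the correct ratio $\widetilde{R}_{\varphi}(u,v)$) and that the chosen $\delta$ actually empties at least one edge so that the induction measure strictly decreases. Once this is pinned down, the argument needs nothing more than the bipartite acyclicity of the network (every edge goes from $0$ to an $e$-node or from an $e$-node to an $f$-node), which prevents residual cycles from appearing and is what lets the greedy peeling drive the remaining flow all the way down to zero.
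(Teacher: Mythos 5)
Your proposal is correct and follows essentially the same greedy path-peeling argument as the paper: pick a second-part edge carrying flow, extract the simple path flow through it, subtract, and induct on the (finite) number of edges in the support. Your version is slightly more careful about the multiplier bookkeeping — note that by the balance equation at $e_u$ one always has $z_{0,e_u} \ge \widetilde{R}_{\varphi}(u,v)\, z_{e_u,f_v}$, so your $\delta$ always equals the second term and the second-part edge is the one emptied, which is exactly what the paper's proof does implicitly.
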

\begin{proof}
Let $z$ be a flow and consider an edge $(e_{u_1}, f_{u_2})$ from the second part of the network and its flow value $z_{e_{u_1}, f_{u_2}}$. That edge receives a flow from edge $(0, e_{u_1})$ which is a part of path flow $z_P$ from path $P = (0, e_{u_1}, f_{u_2})$ that connects the supply node and the destination node $f_{u_2}$. $z_P$ is then a summand in the representation while the remaining flow $z-z_P$ has no flow on the edge $(e_{u_1}, f_{u_2})$ and hence we can remove that edge from the network flow. If we continue, in every step we will construct one summand and remove one edge from the second part of the network. Since we have a finite number of edges, we have a finite number of summands.
\end{proof}

We have the following result.

\begin{proposition}
\label{prop:neg_cost_cycle_optimal}
Solution $z^*$ is optimal in the generalized bipartite network if and only if its residual network does not contain negative cost cycles.
\end{proposition}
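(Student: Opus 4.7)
The plan is to mirror the classical equivalence (Proposition \ref{prop:min_cost_neg_cycle}), while handling the multipliers in the second part of the bipartite network carefully via the decomposition guaranteed by Proposition \ref{prop:decomposition}.

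For the forward direction I would argue by contrapositive: assume the residual network of $z^*$ contains a negative cost cycle $C$. By the definition recalled before the statement, $C$ has the shape depicted in Figure \ref{fig:cycle_generalized}, i.e.\ it enters the intermediate layer at $e_{u_1}$ through the forward supply edge, travels through second-part (possibly reverse) edges to $e_{u_2}$ with cumulative multiplier $\widetilde{R}_\varphi(e_{u_1},e_{u_2})$, and leaves via the reverse edge $(e_{u_2},0)$. Pushing $\epsilon>0$ units of flow from $0$ along $C$ (with $\epsilon$ bounded by the smallest residual capacity along $C$) sends $\epsilon$ units back into $0$ after being scaled by the multipliers, so no destination imbalance at any $f_v$ is disturbed and the new flow is feasible. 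Its cost changes by $\epsilon\bigl(A_\varphi(u_1)-\widetilde{R}_\varphi(e_{u_1},e_{u_2}) A_\varphi(u_2)\bigr)<0$, contradicting the optimality of $z^*$.

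For the backward direction I would suppose that no negative cost cycle exists in the residual network of $z^*$ and take any feasible flow $z'$; the goal is to show $\mathrm{cost}(z')\ge\mathrm{cost}(z^*)$. First, use Proposition \ref{prop:decomposition} to write $z^*=\sum_j \delta^*_j\,P^*_j$ and $z'=\sum_i \delta'_i\,P'_i$, where each $P^*_j,P'_i$ is a two-edge simple path of the form $(0,e_u,f_v)$. Because both flows meet exactly the same demand $p$ at every destination $f_v$, for each $v$ the total amount delivered to $f_v$ by the paths in the two decompositions agrees (after accounting for the multipliers on the $(e_u,f_v)$ edges). I would then pair up, destination by destination, each "extra" $z'$-path $(0,e_{u_1},f_v)$ against a "missing" $z^*$-path $(0,e_{u_2},f_v)$: this pairing produces in the residual network of $z^*$ exactly the cycle $(0,e_{u_1},f_v,e_{u_2},0)$ from Figure \ref{fig:cycle_generalized}. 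The total per-unit cost contribution of that cycle equals $A_\varphi(u_1)-\widetilde{R}_\varphi(e_{u_1},e_{u_2})A_\varphi(u_2)$, which by hypothesis is non-negative. Summing over all such cycles yields $\mathrm{cost}(z')-\mathrm{cost}(z^*)\ge 0$, which is what we want. Longer residual cycles (passing through several second-part edges in the residual) are handled the same way: by $T$-transitivity (used already to reduce to the bipartite model), any such cycle can be split into an equivalent collection of the simple ones above, so non-negativity of each simple cycle suffices.

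The main obstacle will be the rigorous bookkeeping in the decomposition/pairing step in the presence of multipliers: unlike the classical case, the amount a supply-to-destination path $(0,e_u,f_v)$ delivers at $f_v$ differs from the amount it draws at $0$ by the factor $\widetilde{R}_\varphi(\cdot,\cdot)$, so matching "extra" and "missing" path flows at a common destination must be done using the values measured at $f_v$, with the multipliers then reappearing exactly to produce the cycle cost $A_\varphi(u_1)-\widetilde{R}_\varphi(e_{u_1},e_{u_2})A_\varphi(u_2)$. Once the pairing is correctly normalized, the linearity of cost over the decomposition guarantees that the signed sum of residual cycle costs equals $\mathrm{cost}(z')-\mathrm{cost}(z^*)$, and the absence of any negative cost cycle forces this difference to be non-negative, finishing the proof.
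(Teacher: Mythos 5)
Your proposal is correct and follows essentially the same route as the paper: the forward direction by pushing flow along a hypothetical negative cycle, and the backward direction by decomposing $z'$ and $z^*$ into simple supply-to-destination paths (Proposition \ref{prop:decomposition}), pairing them destination by destination into residual cycles of the form in Figure \ref{fig:cycle_generalized} (with the same multiplier-aware normalization the paper handles via the $b_2/b_1$ scaling), and summing their non-negative costs.
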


\begin{proof}
$(\Rightarrow)$ When the solution is optimal, there are no negative cost cycles. If otherwise, we could send a flow through a negative cost cycle and we would decrease the cost of the overall flow as described above. That contradicts the optimality. 

$(\Leftarrow)$ Assume that $z^*$ is a feasible solution whose residual network does not contain negative cost cycles and let $z'$ be a feasible solution. Let $z' = z^* + z''$. We first show that $z''$ is a feasible flow from the residual network of $z^*$, i.e., it satisfies its constraints. For an edge $(0, e_{u_1})$ if the flows are different, we can have either $z'_{0,e_{u_1}} > z^*_{0,e_{u_1}}$ or $z'_{0,e_{u_1}} < z^*_{0,e_{u_1}}$. In the first case, it holds that $z'_{0,e_{u_1}} = z^*_{0,e_{u_1}} + z''_{0,e_{u_1}}$, i.e., $z''_{0,e_{u_1}}$ uses the original edge. Since $z'_{0,e_{u_1}} \leq 1$ then $z''_{0,e_{u_1}} \leq 1 - z^*_{0,e_{u_1}}$ which is a constraint from the residual network. In the second case, it holds that $z'_{0,e_{u_1}} = z^*_{0,e_{u_1}} - z''_{e_{u_1}, 0}$, i.e., $z''_{e_{u_1}, 0}$ uses the reverse edge. Since $z'_{0,e_{u_1}} \geq 0$ then $z''_{e_{u_1}, 0} \leq z^*_{0,e_{u_1}}$ which is a constraint for the reverse edge from the residual network. Using similar reasoning, we can conclude the same for the whole network. 

The next step is to show that $z''$ is a sum of a finite number of simple flow cycles, as shown in Figure \ref{fig:cycle_generalized}, i.e., it has a cycle representation. Proposition \ref{prop:decomposition} states that both flows $z'$ and $z^*$ are sums of simple flows on paths from the supply node to a destination node. Take a summand $z'_{P_1}$ of $z'$ and summand $z^*_{P_2}$ of $z^*$ for $P_1 = (0, e_{u_1}, f_{u_3})$ and $P_2 = (0, e_{u_2}, f_{u_3})$. The paths have the same destination node. Assume that the first summand delivers amount $b_1$ of flow to the destination node while the second delivers amount $b_2$ of flow to the same node. W.L.O.G. assume that $b_1 \geq b_2$. Then the flow $\frac{b_2}{b_1}z'_{P_1} - z^*_{P_2}$ is a flow along cycle $(0, e_{u_1}, f_{u_3}, e_{u_2},0)$ and one of the summands in the cycle representation of $z''$. After the summand is identified, we remove its flow from the consideration. In that moment, $z^*_{P_2}$ is fully removed while we are left with $(1-\frac{b_2}{b_1})z'_{P_1}$ from the first path. We continue to create flow cycles as summands from the remaining path flows from $z'$ and $z^*$. Since after every summand is identified we remove one path flow, the number of summands is finite. Hence, $z''$ is a sum of a finite number of cycle flows. Since $z''$ is a flow in the residual network of $z^*$, all the cycles from its cycle representation are of positive cost by the assumption which implies that $z''$ is of positive cost. Since the cost of $z'$ is a sum of costs of $z^*$ and $z''$, cost of $z'$ is larger than the cost of $z^*$. Since flow $z'$ was an arbitrary feasible flow, we conclude that $z^*$ is an optimal flow.
\end{proof}

\begin{proposition}
Algorithm \ref{algo:gssp} returns an optimal solution.
\end{proposition}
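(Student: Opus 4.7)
The plan is to show by induction on the iterations of Algorithm \ref{algo:gssp} that the residual network never contains a cycle of negative cost in the sense introduced just before Proposition \ref{prop:neg_cost_cycle_optimal}; combined with that proposition, this yields optimality of the reconstructed flow $z$ upon termination. To extract the primal optimum afterwards, I would appeal to LP duality between (\ref{eq:T_P_linear_program}) and (\ref{eq:T_P_linear_dual}).

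For the base case, at initialization the flow is zero, so the residual network has no reverse arcs out of any $f_v$ node. Therefore no cycle of the shape depicted in Figure \ref{fig:cycle_generalized} can be closed, and the invariant holds vacuously.

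For the inductive step, assume the residual network $F'$ has no negative cost cycle before a given iteration. The algorithm selects a demand node $f_{u_k}$ and computes a cheapest augmenting path $P = (0, e_{u_1}, f_{u_2}, e_{u_3}, \ldots, f_{u_k})$, where ``cheapest'' refers to the ratio between the first-partition cost and the product of second-partition multipliers, which becomes an ordinary additive shortest path after a logarithmic transformation (solvable by Bellman--Ford as noted in the text). Let $F''$ denote the residual network obtained after sending the maximal admissible flow along $P$. Suppose for contradiction that $F''$ contains a negative cost cycle $C$. Since $F'$ had none, $C$ must use at least one arc that was introduced, or whose capacity was increased, by the augmentation along $P$, namely a reverse of some arc of $P$. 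Combining $P$ with $C$ back in $F'$ and cancelling the shared reversed arcs produces, via a decomposition analogous to Proposition \ref{prop:decomposition}, a collection of simple path flows from $0$ to destination nodes plus (possibly) some cycles already present in $F'$ whose cost is nonnegative by the inductive hypothesis. Working in log-scale, additivity of cost gives that the total log-cost of this decomposition equals the log-cost of $P$ plus the strictly negative log-cost of $C$. Consequently, at least one simple path in the decomposition reaches $f_{u_k}$ with log-cost strictly smaller than that of $P$, contradicting the choice of $P$ as cheapest. Hence no negative cost cycle appears in $F''$, completing the induction.

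The main obstacle is the bookkeeping of multipliers in the inductive step: unlike the classical SSP argument of \cite{ahuja1988network}, the flow amount along an edge is rescaled by the gain factor, so the ``cancellation'' of reversed arcs of $P$ with edges of $C$ must be tracked in log-cost rather than in raw flow units, and one must check that reversed arcs in the residual retain the correct reciprocal multipliers. The cleanest route is to transform to log-scale from the outset, so that the generalized bipartite network becomes an ordinary additive-cost network and the structural argument parallels the classical SSP proof. Finally, to recover the primal optimum $\alpha^*$, I would invoke LP duality and complementary slackness applied to (\ref{eq:T_P_linear_program})--(\ref{eq:T_P_linear_dual}): $\alpha^*_u$ must coincide with the smallest-cost value of transport from $0$ to $f_u$ in the residual network of the optimal dual flow $z^*$, which is exactly the quantity computed in step~7 of Algorithm \ref{algo:gssp} (without the logarithms that were only used as an internal device to invoke Bellman--Ford).
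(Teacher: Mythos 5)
Your proposal is correct and follows essentially the same route as the paper: both establish, by induction over the iterations, that augmenting along the cheapest path never creates a negative cost cycle in the residual network, and then conclude optimality of the final feasible flow from Proposition \ref{prop:neg_cost_cycle_optimal}, recovering the primal $\alpha^*$ via duality and complementary slackness exactly as the paper does after the proposition. Your inductive step is carried out somewhat more generally (via the path--cycle cancellation in log-scale) than the paper's, which argues the contradiction on a single representative cycle configuration, but the key lemma and the overall structure coincide.
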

\begin{proof}
Assume that in one iteration of Algorithm (\ref{algo:gssp}), the shortest path had the form $P_1 = (0, e_{u_2}, \dots, f_{u_3})$ and that after the step, the negative cost cycle $(0, e_{u_1}, \dots, f_{u_3}, \dots, e_{u_2}, 0)$ was formed. The negative cost cycle is formed from the path $P_2 = (0, e_{u_1}, \dots, f_{u_3})$ and the reverse path $P_1$. The model of such cycle is represented in Figure \ref{fig:one_step_gssp}.
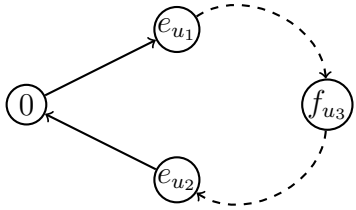
\begin{figure}[H]
\centering
    \begin{tikzpicture}[thick, main/.style = {draw, circle}]
        \tikzstyle{circ}  = [circle, minimum width=15pt, draw, inner sep=0pt]
        
        \node[circ] (1) at (-2,1) {$0$};
        \node[circ] (2) at (0,2) {$e_{u_1}$};
        \node[circ] (3) at (2,1) {$f_{u_3}$};
        \node[circ] (4) at (0,0) {$e_{u_2}$};

        \draw[->] (1) -- (2);
        \draw[->] (4) -- (1);
        \draw[->] (2) to [bend left=60] (3)[dashed];
        \draw[->] (3) to [bend left=60] (4)[dashed];
    \end{tikzpicture}
\caption{Cycle after one step of Algorithm \ref{algo:gssp}}
\label{fig:one_step_gssp}
\end{figure}
The dots in the cycle as well as dashed edges in the figure stand for edges from the second part of the residual network (edges with multipliers). If the cycle is negative, then $A_{\varphi}(u_1) < \widetilde{R}_{\varphi} (e_{u_1}, f_{u_3})\widetilde{R}_{\varphi} (f_{u_3}, e_{u_2}) A_{\varphi}(u_2)$. The latter is equivalent to $  \frac{A_{\varphi}(u_1)}{\widetilde{R}_{\varphi} (e_{u_1}, f_{u_3})} < \frac{A_{\varphi}(u_2)}{\widetilde{R}_{\varphi} (e_{u_2}, f_{u_3})}$ which states that path $P_2$ is actually shorter than $P_1$ which contradicts the assumption that $P_1$ is the shortest path at this step.

Hence, at every iteration of Algorithm \ref{algo:gssp}, there are no negative cost cycles and as soon as the feasible solution is achieved, it will be an optimal one according to Proposition \ref{prop:neg_cost_cycle_optimal}.
\end{proof}

After we constructed the algorithm that solves the dual optimization problem, we need to obtain an optimal solution for the primal which was our initial goal. First, we need one technical proposition.

\begin{proposition}
\label{prop:middle_edge}
For a given generalized bipartite network, there exists optimal solution $z^*$ for which it holds 
$$z^*_{0, e_{u}} > 0 \implies z^*_{e_u, f_u} > 0. $$ 
\end{proposition}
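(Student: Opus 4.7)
The plan is a tie-breaking argument within the set of optimal solutions of (\ref{eq:T_P_linear_dual}). Since that set is a non-empty compact polytope, I would pick $z^*$ among its members that additionally maximizes the linear function $\Phi(z) = \sum_{u\in U} z_{e_u,f_u}$; it suffices to show that this $z^*$ satisfies the required implication. Arguing by contradiction, suppose some $u$ has $z^*_{0,e_u} > 0$ and $z^*_{e_u,f_u} = 0$ (the degenerate case $p = 0$ is trivial since then $z \equiv 0$ is optimal). Flow balance at $e_u$ gives $\sum_v z^*_{e_u,f_v} = z^*_{0,e_u} > 0$, so there exists $v \neq u$ with $z^*_{e_u,f_v} > 0$; and the demand $-p < 0$ at $f_u$, together with $z^*_{e_u,f_u} = 0$, forces the existence of $w \neq u$ with $z^*_{e_w,f_u} > 0$.

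Next, I would introduce a four-edge perturbation parametrised by a small $\epsilon > 0$:
$$
\Delta z_{e_u,f_u} = \epsilon,\quad \Delta z_{e_u,f_v} = -\epsilon,\quad \Delta z_{e_w,f_u} = -\tfrac{\epsilon}{\widetilde{R}_{\varphi}(w,u)},\quad \Delta z_{e_w,f_v} = \tfrac{\epsilon\,\widetilde{R}_{\varphi}(u,v)}{\widetilde{R}_{\varphi}(w,v)}.
$$
Using $\widetilde{R}_{\varphi}(u,u) = 1$ (from reflexivity of $\widetilde{R}$ together with $\varphi(1)=1$), the total received flow at $f_u$ and at $f_v$ is unchanged. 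Balance at $e_u$ gives $\Delta z_{0,e_u} = 0$, so the capacity constraint at $e_u$ is preserved, while balance at $e_w$ yields $\Delta z_{0,e_w} = \epsilon\bigl[\widetilde{R}_{\varphi}(u,v)/\widetilde{R}_{\varphi}(w,v) - 1/\widetilde{R}_{\varphi}(w,u)\bigr]$, which is non-positive by the $T_P$-transitivity inequality $\widetilde{R}_{\varphi}(w,v) \geq \widetilde{R}_{\varphi}(w,u)\widetilde{R}_{\varphi}(u,v)$. Non-negativity of the four altered flow values holds for $\epsilon$ small enough.

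The cost change amounts to $\Bar{A}_{\varphi}(w)\,\Delta z_{0,e_w}$, again non-positive by the same transitivity inequality. Since $z^*$ is optimal, this change must equal zero, so the perturbed flow is also optimal. However, $\Phi$ strictly increases: the $u$-diagonal term gains $\epsilon$, and in the boundary case $w = v$ the $v$-diagonal term gains an additional $\epsilon\,\widetilde{R}_{\varphi}(u,v) > 0$. This contradicts the maximality of $\Phi(z^*)$, completing the argument. The main obstacle is choosing the perturbation so that $T_P$-transitivity simultaneously delivers feasibility (through the non-positive change at $e_w$) and a non-increasing cost; the specific ratio $\widetilde{R}_{\varphi}(u,v)/\widetilde{R}_{\varphi}(w,v)$ for $\Delta z_{e_w,f_v}$ is chosen precisely so that both properties follow from the same inequality.
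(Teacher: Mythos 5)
Your argument is correct, and at its core it exploits the same structural fact as the paper: the four edges $(e_u,f_u)$, $(e_u,f_v)$, $(e_w,f_u)$, $(e_w,f_v)$ form a cycle in the residual network whose cost is non-positive precisely because of $T_P$-transitivity, so flow can be rerouted onto the diagonal edge at no extra cost. The packaging, however, is genuinely different. The paper takes an arbitrary optimal $z^*$, extracts the paths $(0,e_w,f_u)$ and $(0,e_u,f_v)$ from the simple-path decomposition (Proposition~B.1), invokes the no-negative-cycle characterization of optimality (Proposition~B.2) to rule out the strict case of transitivity causing a gain cycle, and in the equality case pushes flow around the zero-cost cycle to build a new optimal solution. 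You instead select, over the optimal face (a non-empty compact polytope, so the maximizer exists), a $z^*$ maximizing the secondary objective $\Phi(z)=\sum_u z_{e_u,f_u}$, and show by an explicit $\epsilon$-perturbation that any violation of the implication lets you strictly increase $\Phi$ while keeping feasibility and cost; both the feasibility of the perturbation (non-positive change in $z_{0,e_w}$) and its cost-neutrality fall out of the single inequality $\widetilde{R}_{\varphi}(w,u)\widetilde{R}_{\varphi}(u,v)\le\widetilde{R}_{\varphi}(w,v)$. What your route buys is self-containedness and a cleaner treatment of the existential quantifier: the paper repairs one violating $u$ at a time and leaves implicit that these repairs do not interfere with one another or fail to terminate, whereas your global tie-breaking objective handles all $u$ simultaneously in one contradiction. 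What it costs is that you redo by hand the balance and capacity checks that the paper's residual-network formalism supplies for free. One cosmetic remark: in the boundary case $w=v$ you do not actually need $\widetilde{R}_{\varphi}(u,v)>0$ for the strict increase of $\Phi$, since the $u$-diagonal already gains $\epsilon$ and no diagonal entry decreases.
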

\begin{proof}
Assume that for some solution $z^*$ and some instance $u$ we have that $z^*_{0, e_{u}} > 0$ and $z^*_{e_u, f_u} = 0$. Then, in the simple path decomposition of the flow, we have path $(0, e_{v}, f_{u})$ that delivers flow to $f_u$, and path $(0, e_u, f_w)$ that uses flow from edge $(0, e_u)$. Then, in the residual network of $z^*$, $C = (e_u, f_u, e_v, f_w, e_u)$ is a cycle. Due to transitivity of $\widetilde{R}$, it holds 
$$
\widetilde{R}_{\varphi}(v,u)\widetilde{R}_{\varphi}(u,w) \leq \widetilde{R}_{\varphi}(v,w).
$$
If $\widetilde{R}_{\varphi}(v,u)\widetilde{R}_{\varphi}(u,w) < \widetilde{R}_{\varphi}(v,w)$, then $C$ is a negative cost cycle which contradicts the optimality of $z^*$. If $R_{\varphi}(v,u)R_{\varphi}(u,w) = R_{\varphi}(v,w)$ then cycle $C$ is a zero-cost cycle and a flow can be sent through the cycle without violating optimlaity. Hence, sending some amount of flow through the cycle, we will construct another optimal solution $z^{**}$ where $z^{**}_{e_u, f_u} > 0$.
\end{proof}
In practice, if we obtain an optimal solution containing an edge for which the previous proposition does not hold, we can get another optimal solution, without such edges, as explained in the proof of the previous proposition. From now on, we assume that we have an optimal solution for which the previous proposition holds. 

We continue with the duality theory of the linear programs. 

According to the strong duality theorem \cite{matousek2007understanding}, if there exists an optimal solution of the dual problem $z^*$ then, there exists an optimal solution for the primal problem $\alpha^{*}$, and it holds that the values of objectives in (\ref{eq:T_L_linear_program2}) and in (\ref{eq:T_P_linear_dual}) are equal, i.e.,
\begin{equation}
\label{eq:diality_expression}
\sum_{u \in U} \Bar{A}_{\varphi}(u) z^*_{0, u} = \displaystyle\sum_{u \in U}  p\alpha^*_u - \sum_{u \in U} \max( \alpha^*_u - \Bar{A}_{\varphi}(u), 0).
\end{equation}
In the previous expression, $y_u$ is replaced with its definition. In an optimal solution, we have that
\begin{equation}
\label{eq:nodes_balance}
    \sum_{v \in U} z_{u, v} = z_{0, u}, \quad \sum_{u \in U} \widetilde{R}_{\varphi} (u,v) z_{u,v} = p.
\end{equation}
We have the following equalities
\begin{align*}
    \sum_{u \in U} \max( \alpha^*_u - \Bar{A}_{\varphi}(u), 0) &= \displaystyle\sum_{u \in U}  p\alpha^*_u - \sum_{u \in U} \Bar{A}_{\varphi}(u) z^*_{0, u} \\
    &= \displaystyle\sum_{u \in U}  p\alpha^*_u - \sum_{u \in U} (\Bar{A}_{\varphi}(u) - \alpha^* _u )z^*_{0, u} - \sum_{u \in U} \alpha^* _u z^*_{0, u} \\
    &= \displaystyle\sum_{u \in U}  p\alpha^*_u - \sum_{u \in U} (\Bar{A}_{\varphi}(u) - \alpha^* _u )z^*_{0, u} - \sum_{u \in U} \alpha^* _u \sum_{v \in U} z^*_{u, v} \\
    &= \displaystyle\sum_{u \in U}  p\alpha^*_u - \sum_{u \in U} (\Bar{A}_{\varphi}(u) - \alpha^* _u )z^*_{0, u} \\ & - \sum_{u,v \in U} (\alpha^* _u - \widetilde{R}_{\varphi}(u, v) \alpha^* _v)  z^*_{u, v}
    - \sum_{u,v \in U}  \widetilde{R}_{\varphi}(u, v) \alpha^* _v  z^*_{u, v} \\
    &= \displaystyle\sum_{u \in U}  p\alpha^*_u - \sum_{u \in U} (\Bar{A}_{\varphi}(u) - \alpha^* _u )z^*_{0, u} \\ & - \sum_{u,v \in U} (\alpha^* _u - \widetilde{R}_{\varphi}(u, v) \alpha^* _v)  z^*_{u, v}
    - \sum_{v \in U} \alpha^* _v   \sum_{u \in U} \widetilde{R}_{\varphi}(u, v)  z^*_{u, v} \\
    &=  \sum_{u \in U} (\alpha^* _u - \Bar{A}_{\varphi}(u))z^*_{0, u}  - \sum_{u,v \in U} (\alpha^* _u - \widetilde{R}_{\varphi}(u, v) \alpha^* _v)  z^*_{u, v}
\end{align*}

The second equality holds because of the left expression in (\ref{eq:nodes_balance}) while the last equality holds because the right expression in (\ref{eq:nodes_balance}). We have that for all $u \in U$, $\max( \alpha^*_u - \Bar{A}_{\varphi}(u), 0) \geq (\alpha^* _u - \Bar{A}_{\varphi}(u))z^*_{0, u}$ and that for all $u,v \in U$, $\alpha^* _u - \widetilde{R}_{\varphi}(u, v) \alpha^* _v \geq 0$, since $\alpha^*$ is a feasible solution. Hence, for the previous equality to hold, we need to have that for all $u \in U $, $\max( \alpha^*_u - \Bar{A}_{\varphi}(u), 0) = (\alpha^* _u - \Bar{A}_{\varphi}(u))z^*_{0, u}$ and that for all $u,v \in U$, $(\alpha^* _u - \widetilde{R}_{\varphi}(u, v) \alpha^* _v)  z^*_{u, v} = 0$. The latter is equivalent to the following set of conditions. 
\begin{itemize}
    \item $z^*_{0, u} = 0 \implies \alpha^*_{u} \leq \Bar{A}_{\varphi}(u)$,
    \item $0 < z^*_{0, u} < 1 \implies \alpha^*_{u} = \Bar{A}_{\varphi}(u)$,
    \item $z^*_{0, u} = 1 \implies \alpha^*_{u} \geq \Bar{A}_{\varphi}(u)$,
    \item $z^*_{u,v} > 0 \implies \alpha^* _u - \widetilde{R}_{\varphi}(u, v) \alpha^* _v = 0$,
\end{itemize}
for $u,v \in U$. We have the following conclusion: if we solve the dual optimization problem and obtain an optimal solution $z^*$, then a solution of the primal optimization problem is any $\alpha^*$ which satisfies the conditions listed above.

Moreover, $\alpha^*$ can be constructed by performing step 7 of Algorithm \ref{algo:gssp} on the residual network of $z^*$, i.e., it is the smallest possible cost of the transport from the supply node to the destination nodes. It is easily verifiable that such $\alpha^*$ satisfies the conditions above. The proof of this verification lies in that if we assume that some condition is not satisfied, then we would have a negative cost cycle which contradicts the optimality of $z^*$. To prove the contradiction, we need Proposition \ref{prop:middle_edge}.

\section{Proof of Proposition \ref{prop:parameter_monotonicity}}
\label{sec_app:proof_parameter_monotonicity}
Let $\alpha^p_{u} = \varphi(\hat{A}_p(u))$ and $\alpha^q_{u} = \varphi(\hat{A}_q (u))$ for $u \in U$. Then $\hat{A}_p(u) \leq \hat{A}_q(u) \Leftrightarrow \alpha^p_{u} \leq \alpha^q_{u}$. To prove this proposition, we will use Algorithm \ref{algo:ssp} in case of $T_L$ and Algorithm \ref{algo:gssp} in case if $T_P$. We apply both algorithms on the bipartite flow network in the way that we first deliver amount $p$ of flow to every destination node, then we calculate $\alpha^p$ as the smallest cost from the supply node to the destination nodes in the residual network, then we deliver additional amount $q - p$ of flow to every destination node and then we calculate $\alpha^q$ in the same way as $\alpha^p$. Using this procedure, we may notice that to calculate $\alpha^q$ we need a few more iterations of the algorithms after $\alpha^p$. Bearing this in mind, it is enough to prove that after every iteration of the algorithm, i.e., after sending some amount of flow to a destination node and updating the residual network, the cost from the supply node to every destination node stayed the same or is increased.

When updating residual network $F'$, the possible changes in the residual networks are the following:
\begin{itemize}
    \item Reverse edges between the supply node and the intermediate nodes can be added while the original edges can be removed.
    \item Reverse edges between the intermediate and destination nodes can be added or removed.
\end{itemize}
Adding reverse edges between the supply node and intermediate nodes is not important in this case, since shortest paths do not use these edges. Removing the original edges between the same nodes will not reduce the costs since the shortest paths now chose among the smaller set of edges. The same holds if we remove reverse edges between the intermediate nodes. 

The last step is to prove that adding reverse edges between the intermediate and destination nodes will not reduce the costs from the supply to the destination nodes.

For that purpose, we consider Figure \ref{fig:monotonic_costs}.

\begin{figure}[!htb]
\centering
    \begin{tikzpicture}[thick, main/.style = {draw, circle}]
        \tikzstyle{circ}  = [circle, minimum width=15pt, draw, inner sep=0pt]
        
        \node[circ] (1) at (-2.5,2) {$0$};
        \node[circ] (2) at (0,4) {$e_{u_1}$};
        \node[circ] (3) at (2.5,4) {$f_{u_1}$};
        \node[circ] (4) at (2.5,2) {$f_{u_2}$};
        \node[circ] (5) at (2.5,0) {$f_{u_3}$};

        \draw[->] (1) -- node[right] {$C_c$}(2)[dashed];
        \draw[->] (1) to [bend right=30] node[above] {$C_a$}(3)[dashed];
        \draw[->] (1) -- node[above] {$C_y$} (5)[dashed];
        \draw[->] (2) to [bend left=30] node[above] {$x$} (3);
        \draw[->] (3) to [bend left=30] node[above] {$-x$} (2);
        \draw[->] (2) -- node[right] {$C_b$}(5)[dashed];
        \draw[->] (3) -- node[right] {$C_d$}(4)[dashed];
    \end{tikzpicture}
\caption{Flow modeled as a bipartite graph}
\label{fig:monotonic_costs}
\end{figure}
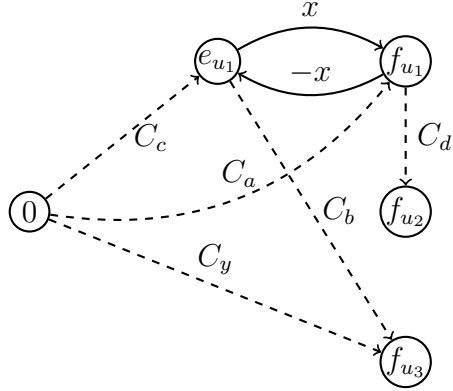

With dashed lines, we denote certain paths whose costs are marked on the figure. In both cases of $T_L$ and $T_P$, the costs are the values used to calculate the shortest paths. Assume that in step $i$, we were calculating the shortest path between $0$ and $f_{u_2}$ and we obtained that the shortest path is $(0, \dots, e_{u_1}, f_{u_1}, \dots, f_{u_2})$ and since some flow is sent through that path, a reverse edge $(f_{u_1}, e_{u_1})$ is created with cost $-x$. Assume that before step $i$, the shortest path from $0$ to $f_{u_3}$ was  $(0, \dots, f_{u_3})$ with cost $C_y$ while after the previous step and after adding reverse edge $(f_{u_1}, e_{u_1})$ the shortest path is $(0,\dots ,f_{u_1}, e_{u_1}, \dots, f_{u_3})$ with cost $C_a - x + C_b$. Then, we have that $C_a + C_b < x+ C_y$. Since the shortest path in step $i$ was $(0, \dots, e_{u_1}, f_{u_1}, \dots, f_{u_2})$, it holds that $ C_c + x \leq C_a$. Adding this to the previous expression, we have that 
$$
x + C_y > C_a + C_b \geq C_c + x + C_b \Leftrightarrow C_y > C_c + C_b.
$$
The last inequality contradicts the assumption that before step $i$, the smallest cost between $0$ and $f_{u_3}$ is $C_y$.

\end{document}